\documentclass{article} 
\usepackage{iclr2027_conference,times}

\usepackage{amsmath,amsfonts,bm}

\def\eqref#1{equation~\ref{#1}}

\def\1{\bm{1}}

\DeclareMathAlphabet{\mathsfit}{\encodingdefault}{\sfdefault}{m}{sl}
\SetMathAlphabet{\mathsfit}{bold}{\encodingdefault}{\sfdefault}{bx}{n}

\usepackage{hyperref}
\usepackage{url}
\usepackage{graphicx}
\usepackage{booktabs}
\usepackage{subcaption}
\usepackage{array}
\usepackage{amssymb}
\usepackage{amsthm,amsfonts}
\usepackage{dsfont}
\usepackage[most]{tcolorbox}
\usepackage{algorithm,algpseudocode}
\usepackage{multirow} 
\newtcolorbox{promptbox}[1]{
  breakable,
  colback=gray!3,
  colframe=black!65,
  title={#1},
  fonttitle=\bfseries,
  boxrule=0.6pt,
  arc=1mm,
  left=1.2mm,
  right=1.2mm,
  top=1.2mm,
  bottom=1.2mm
}

\newtheorem{informaltheorem}{Theorem (Informal)}
\newtheorem{proposition}{Proposition}[section]
\newtheorem{theorem}{Theorem}[section]
\newtheorem{lemma}{Lemma}[section]
\newtheorem{assumption}{Assumption}[section]
\newtheorem{corollary}{Corollary}[section]
\theoremstyle{definition}
\newtheorem{definition}{Definition}[section]
\newtheorem{remark}{Remark}[section]

\title{From Preference to Reciprocity: Decentralized Matching with Empirically Grounded LLM-agent Based Modeling}

\author{
\begin{minipage}{\textwidth}
\centering
\vspace*{15pt}
\normalsize
Wangxuan Fan\thanks{Equal contribution.}\quad
Xiaoyu Nie\footnotemark[1]\quad
Zhoutian Shi\quad
Xiangcheng Meng\\[4pt]
Shipei Zeng\quad
Pin Gao\quad
Yan Hu\thanks{Corresponding author.}\quad
Zhongxiang Dai\footnotemark[2]\\[8pt]
{\small The Chinese University of Hong Kong, Shenzhen}\\[5pt]
{\footnotesize
\texttt{\{wangxuanfan,xiaoyunie,zhoutianshi,xiangchengmeng\}@link.cuhk.edu.cn}\\[2pt]
\texttt{\{gaopin,huyan,daizhongxiang\}@cuhk.edu.cn}
\quad
\texttt{shipei.zeng@sribd.cn}
}
\end{minipage}
}
\iclrfinalcopy 

\usepackage{etoolbox}

\makeatletter
\patchcmd{\@maketitle}
  {\begin{tabular}[t]{l}\bf\rule{\z@}{24pt}\@author\end{tabular}}
  {\begin{center}\@author\end{center}}
  {}
  {\PackageError{author-layout}{Author layout patch failed}
    {Check the definition of @maketitle in the style file.}}
\makeatother

\begin{document}

\maketitle
\fancyhead{}
\lhead{Preprint. Under review.}
\renewcommand{\headrulewidth}{0pt}

\begin{abstract}
Bipartite matching is a fundamental problem in game theory and market design. Classical approaches such as Gale--Shapley assume complete preferences and centralized computation, whereas many real-world matching processes are decentralized, asynchronous, and shaped by sequential interaction under limited information. We propose a dynamic bipartite matching framework that combines large language model (LLM) agents with contextual bandits. In a simulated Chinese marriage market, economically grounded LLM agents evaluate locally encountered candidates, while agent-specific Logistic-UCB models learn reciprocal acceptance from realized proposal outcomes. The mechanism therefore separates two decisions---\emph{whom do I like?} and \emph{who is likely to like me back?}---without requiring ex ante market-wide preference rankings. We first validate LLM-induced mate preferences against the empirical conditional-logit reference across multiple LLM backbones. In the $50\times50$ matching experiment, Bandit-UCB achieves the highest mean mutual welfare (56.01 versus 54.87 for Gale--Shapley), a smaller gender rank gap than the classical baselines, and the fewest blocking pairs among the LLM-ABM policies. Learned acceptance models show economically interpretable gender-differentiated associations, while counterfactual setups reveal no systematic unilateral advantage from prior search knowledge. Overall, these results support the advantages of decentralized matching with LLM-based behavioral modeling and online learning under incomplete information for economic simulation and computational social science research.
\end{abstract}

\section{Introduction}
\label{sec:introduction}

Two-sided matching is a fundamental problem in game theory and market design, with applications ranging from labor markets and school choice to marriage markets \citep{gale1962college,kamecke1992two,roth1992two}. 
The classical Gale--Shapley deferred-acceptance algorithm \citep{gale1962college} guarantees a stable matching by taking agents' ordinal preference rankings as given. 
While this formulation has been highly successful as a normative allocation mechanism, it abstracts away from several features of naturally occurring matching processes \citep{eyupoglu2021stable, axtell2008high,zhang2024decentralized}: agents typically observe only a limited set of alternatives, interactions occur sequentially and asynchronously, and preferences or feasible opportunities may be learned only through experience. 
Moreover, stability does not necessarily coincide with aggregate welfare.
Both analytical and computational studies have shown that restricting attention to stable outcomes can impose a non-negligible welfare cost \citep{axtell2008high,boudreau2013preferences,chen2021matchings,ortega2024unimprovable}. 
These observations motivate a complementary view of matching: rather than asking only how to compute an allocation given a complete preference profile, we study how preference information, beliefs, and matches can jointly evolve through decentralized interaction.

Agent-based modeling (ABM) provides a natural bottom-up framework for such a perspective. 
By specifying heterogeneous agents, local information, and interaction rules, ABMs generate aggregate outcomes from repeated individual decisions \citep{epstein2012generative,axtell2025abm}. 
This is particularly relevant to marriage markets, where individual partner choices depend on the current pool of available alternatives, while every new relationship changes that pool for subsequent decisions \citep{kalmijn1998intermarriage, chiappori2006divorce,van2016introduction,grow2019design}. \citet{axtell2008high}, for example, replace centralized deferred acceptance with decentralized search, proposal, and rematching, achieving higher average welfare without preserving guaranteed stability, which is not necessarily a strict desideratum in marriage markets. 
Yet decentralizing the \emph{matching process} does not by itself solve the problem of \emph{preference formation}: candidate rankings still need to be specified before the matching dynamics begin. 
This is a central difficulty in agent-based marriage-market modeling. 
As emphasized by \citet{grow2019design}, theories of mate choice are often \textit{competing}, \textit{multidimensional}, and \textit{verbal} rather than directly mathematical, forcing modelers to translate behavioral assumptions into hand-designed utility functions and parameterizations.

Large language models (LLMs) provide a novel way to operationalize semantically rich behavioral assumptions.
Existing studies show that LLM agent-based modeling (LLM-ABM) can condition decisions on heterogeneous personas and contextual information and can be embedded in agents that interact over extended trajectories \citep{argyle2023out,aher2023using,park2023generative}.
However, we do not treat an LLM as an unconstrained oracle of human preferences: expressive behavior is not necessarily economically valid behavior, motivating careful specification and validation when LLMs are used as simulated economic actors \citep{horton2023large,ludwig2024large}.
Nor do we provide the agents with market-wide preference rankings in advance, or let them infer the results in one-shot.
Instead, LLM valuation is triggered only when agents encounter one another.
Yet knowing how much an agent values a candidate does not reveal whether that candidate will reciprocate.
Because the proposer observes acceptance only after making a proposal, reciprocity becomes a sequential learning problem under partial feedback.
This naturally motivates an agent-specific contextual bandit that learns, from realized interactions, which desirable candidates are likely to accept.
Unlike matching-bandit approaches that learn latent rewards or unknown preference rankings \citep{liu2021bandit,dai2021learning,cen2022regret}, our framework separates \emph{behavioral evaluation} from \emph{interaction learning}: the LLM determines how much an agent values an encountered candidate, while the bandit learns how likely that candidate is to accept.


Based on this separation, we develop a decentralized and asynchronous bipartite matching framework that combines behaviorally grounded LLM agents with agent-specific contextual bandits.
In the marriage-market setting, agents encounter only locally available alternatives, evaluate them through LLM-based preferences, and learn reciprocal acceptance from realized outcomes, without requiring complete rankings in advance.
Experiments on a simulated Chinese marriage market validate the behavioral grounding of the agents and show favorable welfare--stability outcomes relative to classical and decentralized baselines, while learned acceptance models and counterfactual experiments reveal structured reciprocal-choice patterns and the limits of unilateral information advantages.

Our contributions are threefold:
\begin{itemize}

\item \textbf{Empirically grounded LLM-agent based modeling.}
We develop a general approach for constructing and validating LLM agents from domain-specific empirical evidence, and instantiate it in a marriage market using socioeconomic characteristics and gender-specific mate-preference evidence. We validate the induced preferences against an empirical reference across multiple LLM backbones and market scales.

\item \textbf{Decoupled decentralized matching mechanism.}
We formulate matching as an asynchronous local-interaction process that separates LLM-based partner valuation from agent-specific reciprocity learning, without requiring agents to access complete preference rankings. For a theoretically calibrated variant, we establish local proposal-regret bounds under realizable and dynamically misspecified acceptance models.

\item \textbf{Economic evaluation and interpretable results.}
We show that the proposed framework preserves the core two-sided matching problem while relaxing strong information assumptions, and achieves better overall welfare--stability outcomes. The learned reciprocal-acceptance models yield economically interpretable patterns, while warm-start counterfactuals reveal how prior market knowledge translates into individual and market-level matching outcomes.

\end{itemize}
Our code is publicly available in our
\href{https://github.com/YuanJrShiuan/LLM_ABM_Bipartite_Matching_for_Marriage}
{GitHub repository}.
Related work is reviewed in Appendix~\ref{app:related_work}.

\section{Empirically Grounded LLM-Agent Modeling}
\label{sec:llm-agent-modeling}

\subsection{Synthetic Population with Empirical Heterogeneity}

Our modeling principle is to construct LLM agents from domain-specific empirical evidence rather than unconstrained persona generation. This requires identifying behaviorally relevant observable characteristics and preserving important population heterogeneity and dependence structures. The resulting structured profiles provide both the population representation and the empirical basis for downstream LLM-based behavioral evaluation.

We instantiate this approach in a synthetic Chinese marriage market. Each agent is endowed with attributes emphasized in empirical studies of mate choice \citep{zhou2023gender}, including \textit{age, income, education, family background, housing status, and physical appearance}. These dimensions capture the \textit{gender heterogeneity} and \textit{social-class differences} documented in the empirical reference.

Following the dependency-aware construction of \citet{li2026matraix}, we sample demographic variables to approximate observed population heterogeneity while preserving key relationships, such as income--education dependence, rather than treating attributes independently. Demographic and economic variables are based on the 2015 National 1\% Population Sample Survey of China \citep{nbs2015mini}, consistent with the period studied by \citet{zhou2023gender}, while physical appearance is sampled from an approximately normal distribution based on empirical evidence \citep{hamermesh1993beauty}. Full sampling details and realized population composition are reported in Appendix~\ref{app:persona_sampling}.

\subsection{LLM Personas and Mate Preference}
Each structured profile is rendered as a concise public persona containing only observable attributes available to potential partners, ensuring that LLM evaluations use the same information encoded in the demographic profile. The rendering template is provided in Appendix~\ref{app:persona_prompt}.

To model mate preferences, we construct gender-specific prompts grounded in the regression evidence of \citet{zhou2023gender} and use chain-of-thought prompting \citep{wei2022chain} to structure evaluation. The full private preference prompts are given in Appendix~\ref{app:preference_prompts}.

\begin{figure}[htbp]
    \centering
    \includegraphics[width=0.95\linewidth]{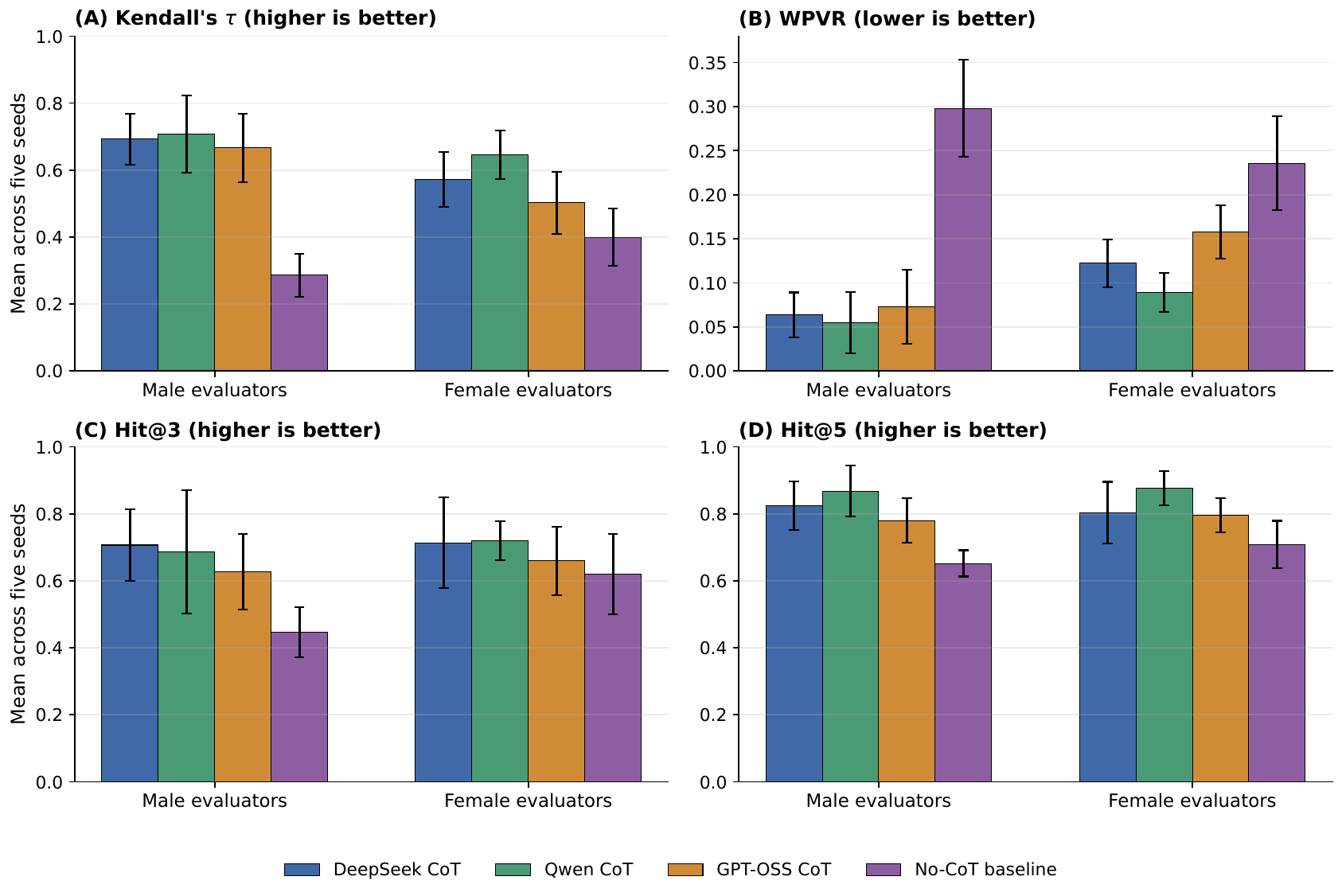}
    \caption{Empirical-reference alignment in the 10$\times$10 validation task. Bars show means across five seeds and error bars show standard deviations.}
    \label{fig:oracle_alignment_10x10}
\end{figure}

\subsection{Validation of Modeling}
\label{sec:validation_modeling}

As discussed in Appendix~\ref{app:related_work}, validating persona-conditioned decisions is important for establishing the economic interpretability of LLM-based social simulations.
We therefore compare the mate-preference rankings induced by our LLM agents with an empirical reference model derived from \citet{zhou2023gender}.
The reference model maps each candidate profile to a conditional-logit utility under the corresponding gender-specific specification, while the LLM assigns an overall desirability score from the evaluator's public persona and private mate-preference prompt. We evaluate ranking agreement using Kendall's $\tau$, weighted pairwise violation rate (WPVR), and top-$K$ overlap, with definitions provided in Appendix~\ref{app:validation_metrics}.

Figure~\ref{fig:oracle_alignment_10x10} reports the validation results for a $10\times10$ market across five random seeds.
Across all three LLM backbones, preference-structured prompts achieve stronger alignment with the empirical reference than the no-CoT baseline, indicating that the result is not specific to a single backbone.
A $50\times50$ scale-up validation is reported in Appendix~\ref{app:validation_50x50}.

\begin{figure}[htbp]
    \centering
    \includegraphics[width=0.95\linewidth]{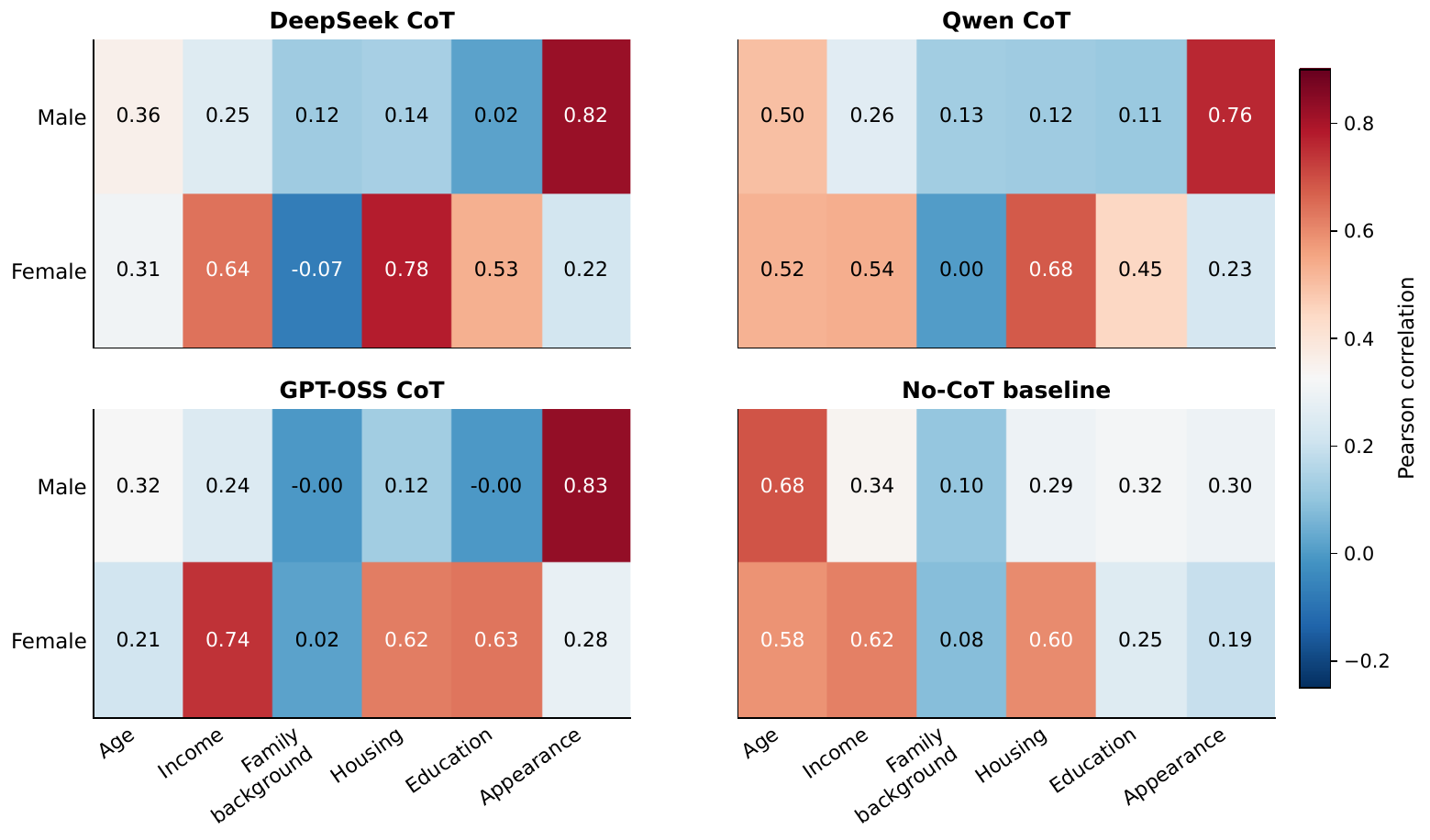}
    \caption{Dimension-consistency heatmap for agents in the 10$\times$10 validation. Entries report Pearson correlations between overall desirability and dimension-level scores, averaged across five seeds.}
    \label{fig:deepseek_consistency_10x10}
\end{figure}

Figure~\ref{fig:deepseek_consistency_10x10} provides a complementary internal-consistency diagnostic. Appearance is most strongly associated with overall desirability for male agents, whereas female evaluations are more closely associated with socioeconomic dimensions, particularly housing, income, and education. These patterns are consistent with the gender-specific preference structure documented by \citet{zhou2023gender}, while the no-CoT baseline fails to reproduce the same structure. Results for the $50\times50$ setting are reported in Appendix~\ref{app:validation_50x50}.

\section{Decentralized Matching System}
\label{sec:decentralized_matching}

\subsection{A Decentralized and Asynchronous Matching Market}
\label{sec:dynamic_market}

We consider a two-sided marriage market with male agents
$\mathcal{M}=\{m_1,\ldots,m_{N_M}\}$, female agents
$\mathcal{W}=\{w_1,\ldots,w_{N_W}\}$, and population
$\mathcal{A}=\mathcal{M}\cup\mathcal{W}$.
At period $t$, $\mu_t(i)$ denotes agent $i$'s current partner, with
$\mu_t(i)=\varnothing$ if $i$ is single and
$\mu_t(i)=j\Leftrightarrow\mu_t(j)=i$.
A match is either \emph{temporary} or \emph{permanent}. Temporary relationships remain contestable and can be replaced by later accepted proposals, whereas permanently matched agents exit the active market.

The market evolves through decentralized and asynchronous activation. At the beginning of period $t$, let $\mathcal{A}^{\mathrm{act}}_t\subseteq\mathcal{A}$ denote agents that have not entered permanent matches. A period consists of $|\mathcal{A}^{\mathrm{act}}_t|$ sequential activation opportunities, each sampling one agent uniformly with replacement from this set. Thus, an agent may be activated multiple times within a period or not at all.

Activation does not give an agent access to the entire opposite side of the market. When agent $i$ is activated, it observes only a locally available candidate set
$
\mathcal{C}_t(i)
\subseteq
\mathcal{P}_t(i)
$, 
where $\mathcal{P}_t(i)$ denotes the set of currently feasible agents on the opposite side who have not entered permanent matches and the candidate set $\mathcal{C}_t(i)$ is randomly sampled from this feasible pool. Decisions therefore depend on both preferences and locally encountered opportunities.
This differs from Gale--Shapley \citep{gale1962college}, which uses complete rankings in a centralized procedure, and from \citet{axtell2008high}, which decentralizes interaction but retains pre-specified preference lists.

\subsection{Whom Do I Like? Local LLM Valuation}
\label{sec:local_valuation}

The first decision of an activated agent is a valuation problem: \emph{whom do I like?}
For evaluator $i$ and locally observed candidate $j$, the simulated agent in Section~\ref{sec:llm-agent-modeling} produces a normalized subjective utility represented by an overall preference score:
\begin{equation}
U_i(j)\in[0,1],
\qquad
U_i(j)\neq U_j(i)
\ \text{in general}.
\label{eq:local_utility}
\end{equation}
The asymmetry is essential: $i$'s valuation of $j$ does not reveal whether $j$ would reciprocate.

Let $r_i$ denote agent $i$'s reservation utility, the minimum acceptable partner utility. Its current benchmark is
\begin{equation}
b_{i,t}
=
\begin{cases}
r_i, & \mu_t(i)=\varnothing,\\[3pt]
\max\{r_i,U_i(\mu_t(i))\}, & \mu_t(i)\neq\varnothing.
\end{cases}
\label{eq:proposer_benchmark}
\end{equation}
For each $j\in\mathcal{C}_t(i)$, define the utility improvement and eligible set by
\begin{equation}
\Delta U_{ij,t}=U_i(j)-b_{i,t},
\qquad
\mathcal{E}_t(i)
=
\left\{
j\in\mathcal{C}_t(i):
\Delta U_{ij,t}>\epsilon_U, \epsilon_U\geq0
\right\}.
\label{eq:eligible_candidate_set}
\end{equation}
This eligibility condition separates \emph{preference} from \emph{strategic search}: only candidates preferred to the current benchmark are considered, so reciprocal-acceptance learning cannot make an otherwise undesirable candidate attractive. The LLM determines whom the agent wants; the remaining uncertainty is which of these desirable candidates are likely to reciprocate.

\subsection{Who Is Likely to Like Me Back? Local Reciprocal-Acceptance Learning}
\label{sec:local_bandit}

The second decision concerns reciprocity: \emph{who is likely to like me back?}
If $i$ proposes to $j$, the receiver independently evaluates $i$ against its current benchmark,
\begin{equation}
b^R_{j,t}
=
\begin{cases}
r_j, & \mu_t(j)=\varnothing,\\[2pt]
\max\{r_j,U_j(\mu_t(j))\},
& \mu_t(j)\neq\varnothing.
\end{cases}
\label{eq:receiver_benchmark}
\end{equation}
The realized response is
\begin{equation}
Y_{i\rightarrow j,t}
=
\mathbf{1}
\left\{
U_j(i)>b^R_{j,t}
\right\}
\in\{0,1\}.
\label{eq:binary_acceptance}
\end{equation}

Crucially, proposer $i$ does not observe $U_j(i)$, the receiver's incumbent valuation, or its eventual decision before proposing.
Acceptance is revealed only after a proposal is made, so feedback is observed only for candidates actually approached.
Because proposal opportunities are limited, the proposer must balance exploiting candidates believed likely to reciprocate with exploring uncertain candidates whose responses are informative.
This sequential decision problem with contextual, partial feedback naturally motivates an agent-specific contextual bandit.

For each exposed candidate $j$, proposer $i$ observes a dyadic context
$x^{(i)}_{j,t}=\phi(z_i,z_j)\in\mathbb{R}^d$
constructed from the public personas $z_i$ and $z_j$, and learns only from proposals it actually makes.
Its local history is $\mathcal{H}_{i,t}=\{(x_n^{(i)},y_n^{(i)})\}_{n=1}^{N_{i,t}}$.
Unchosen candidates generate no label, and histories are not shared across agents.
The context construction and information restrictions are detailed in Appendix~\ref{app:bandit_context}.

We approximate reciprocal acceptance using an agent-specific logistic working model,
\begin{equation}
p_{\theta_i}(x):=\sigma(\theta_i^\top x),
\qquad
\sigma(z)=\frac{1}{1+e^{-z}}.
\label{eq:local_logistic_model}
\end{equation}
This is an approximation based on proposer-observable information: the receiver's private benchmark changes with its current relationship, so acceptance probabilities need not follow a fixed logistic law even for the same public dyadic context.
To balance exploitation and exploration, we augment the fitted logit predictor with local uncertainty:
\begin{equation}
\begin{aligned}
s^{(i)}_{j,t}
&=
\sqrt{
{x^{(i)}_{j,t}}^\top
V_{i,t}^{-1}
x^{(i)}_{j,t}
},
\\[4pt]
p_{ij,t}^{\mathrm{UCB},i}
&=
\sigma
\left(
\widehat{\theta}_{i,t}^{\top}x^{(i)}_{j,t}
+
\beta_{i,t}s^{(i)}_{j,t}
\right).
\end{aligned}
\label{eq:logistic_ucb}
\end{equation}
Here, $\widehat{\theta}_{i,t}$ is fitted by local $L_2$-regularized logistic regression,
$V_{i,t}$ is the corresponding regularized Fisher-information matrix, and
$\beta_{i,t}$ controls exploration.
The estimator and practical Logistic-UCB derivation are provided in Appendix~\ref{app:logistic_ucb_derivation}.

Preference and reciprocal uncertainty are finally combined through
\begin{equation}
\boxed{
I_{ij,t}
=
\Delta U_{ij,t}
p_{ij,t}^{\mathrm{UCB},i}
},
\qquad
j\in\mathcal{E}_t(i).
\label{eq:proposal_index}
\end{equation}
If the true acceptance probabilities conditional on the proposer's current information were known, maximizing their product with $\Delta U_{ij,t}$ would maximize expected one-step utility improvement (Proposition~\ref{prop:theory_one_step}). Equation~\ref{eq:proposal_index} replaces these probabilities with optimistic estimates from the working model, allocating proposals among already desirable candidates.

\begin{informaltheorem}[Local learning under dynamic misspecification]
\label{thm:informal_local_learning}
Under the boundedness and predictability assumptions in Appendix~\ref{app:theoretical_analysis}, fix a proposer whose true conditional acceptance probabilities differ from a fixed logistic reference by at most a predictable envelope $\eta_n$ for every eligible candidate at attempt $n$. Logistic-UCB with the norm-constrained estimator and envelope-calibrated confidence radii specified there satisfies, with probability at least $1-\delta$,
\begin{equation}
\mathcal R_N
\le
\widetilde{\mathcal O}\!\left(
d\sqrt{N}
+\sqrt{dN\sum_{n=1}^{N}\eta_n^2}
+\sum_{n=1}^{N}\eta_n
\right).
\label{eq:informal_local_regret}
\end{equation}
Here $N$ counts this proposer's attempts, $d$ is the context dimension, and $\mathcal R_N$ measures cumulative expected one-step improvement lost relative to an oracle facing the same eligible candidates and knowing their true acceptance probabilities. The notation suppresses logarithmic factors, with other problem constants fixed. Realizable acceptance ($\eta_n\equiv0$) recovers $\widetilde{\mathcal O}(d\sqrt{N})$ regret. For fixed $d$, $\mathcal R_N/N\to0$ if $0\le\eta_n\le1$ and $\sum_{n=1}^{N}\eta_n=o(N/\log N)$.
\end{informaltheorem}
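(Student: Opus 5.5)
The argument follows the standard optimism template for generalized linear bandits (Filippi et al.; Faury et al.), modified to carry a misspecification bias term. Fix the proposer and index its attempts by $n=1,\dots,N$. At attempt $n$ it faces an eligible set $\mathcal{E}_n$ with contexts $x_{n,j}$, improvements $\Delta U_{n,j}\in[0,1]$, and true conditional acceptance probabilities $q_{n,j}$. By assumption, $|q_{n,j}-\sigma(\theta_*^\top x_{n,j})|\le\eta_n$ for a fixed reference $\theta_*$ with $\|\theta_*\|\le S$.

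By Proposition~\ref{prop:theory_one_step}, the oracle chooses $j_n^*=\arg\max_j \Delta U_{n,j}q_{n,j}$. The per-step regret is therefore $r_n=\Delta U_{n,j_n^*}q_{n,j_n^*}-\Delta U_{n,j_n}q_{n,j_n}$. We replace $q$ by $\sigma(\theta_*^\top x)$ on both terms; since $\Delta U\le1$, this costs at most $2\eta_n$ per step, which yields the $\sum_n\eta_n$ term.

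\textbf{Step 1: a misspecification-aware confidence set.} For the norm-constrained estimator $\widehat\theta_n$, we control $\|\widehat\theta_n-\theta_*\|_{V_n}$.
\begin{itemize}
\item Write the score equation and decompose the labels as $y_m-\sigma(\theta_*^\top x_m)=\xi_m+b_m$. Here $\xi_m=y_m-q_m$ is a martingale difference bounded by $1$ given the predictable past, and the bias satisfies $|b_m|\le\eta_m$.
\item Boundedness gives a lower bound $\kappa^{-1}$ on $\dot\sigma$ over the constrained ball. The mean-value theorem then gives $\|\widehat\theta_n-\theta_*\|_{V_n}\le\kappa\big(\|\sum_m x_m\xi_m\|_{V_n^{-1}}+\|\sum_m x_m b_m\|_{V_n^{-1}}\big)+\sqrt{\lambda}S$. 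If the estimator is a projection onto the ball, a projection step adds only a constant factor.
\item The noise term is at most $\sqrt{2\log(1/\delta)+d\log(1+n/(\lambda d))}$ by the self-normalized bound of Abbasi-Yadkori et al.
\item For the bias term, Cauchy--Schwarz on the weighted sum gives $\|\sum_m x_mb_m\|_{V_n^{-1}}\le\sqrt{\sum_m b_m^2}\cdot\sqrt{\sum_m x_m^\top V_n^{-1}x_m}$. The second factor is at most $\sqrt{d}$ because $\operatorname{tr}(V_n^{-1}\sum_m x_mx_m^\top)\le d$.
\item Hence we calibrate the radius as $\beta_n=\widetilde{\mathcal O}(\sqrt d+\sqrt{d\sum_{m<n}\eta_m^2})$. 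Since the $\eta_m$ are predictable, $\beta_n$ is computable, which is what "envelope-calibrated" refers to.
\end{itemize}

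\textbf{Step 2: optimism and per-step bound.} On the good event,
\[
\sigma(\theta_*^\top x)\le\sigma(\widehat\theta_n^\top x+\beta_n s_n(x))\quad\text{and}\quad \sigma(\widehat\theta_n^\top x+\beta_n s_n(x))-\sigma(\theta_*^\top x)\le \tfrac{1}{4}\cdot2\beta_n s_n(x),
\]
using that $\sigma$ is $1/4$-Lipschitz. Because the UCB index is maximized at $j_n$ and $\Delta U\ge0$, we get
\[
\Delta U_{j^*}\sigma(\theta_*^\top x_{j^*})\le \Delta U_{j_n}p^{\mathrm{UCB}}_{j_n},
\]
so that $r_n\le 2\eta_n+\tfrac12\beta_n s_n(x_{n,j_n})$. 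We also cap each step at $r_n\le1$.

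\textbf{Step 3: summation.} By Cauchy--Schwarz and the elliptical potential lemma,
\[
\sum_n\min(1,\beta_n s_n)\le\beta_N\sqrt{N\cdot 2d\log(1+N/(\lambda d))}.
\]
Substituting $\beta_N$ gives the two terms $\widetilde{\mathcal O}(d\sqrt N)$ and $\widetilde{\mathcal O}(\sqrt{dN\sum_n\eta_n^2})$, which together with $\sum_n\eta_n$ is exactly \eqref{eq:informal_local_regret}. The realizable case is immediate.

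\textbf{Step 4: the vanishing-average claim.} Since $0\le\eta_n\le1$, we have $\sum_n\eta_n^2\le\sum_n\eta_n=o(N/\log N)$. Then $\sqrt{dN\cdot o(N/\log N)}\cdot\mathrm{polylog}/N\to0$ provided the suppressed log factor is $\sqrt{\log N}$, and the linear term divided by $N$ also vanishes.

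\textbf{Main obstacle.} The main difficulty is Step 1: obtaining the confidence radius with the bias entering only through $\sqrt{d\sum\eta^2}$, while the curvature constant $\kappa$ stays bounded. This is why the norm constraint is needed, since the usual MLE may leave the ball under misspecification. I would first use a projected estimator in the style of Filippi et al. together with the $\sqrt d$ trace bound above. I would also check carefully that the logarithmic powers are small enough for the $o(N/\log N)$ conclusion.
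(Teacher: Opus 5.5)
\textbf{There is a genuine gap in the misspecification term: your Step~1 loses a factor $\sqrt d$, and Step~3 then claims a rate your radius does not give.}

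Your overall architecture matches the paper's proof of Theorem~\ref{thm:theory_misspec}. Both use the first-order condition, a mean-value expansion with curvature guaranteed on the norm ball, the self-normalized noise bound, a separate bias term, optimism, and the elliptical potential. The problem is how you bound the bias. You apply the triangle inequality and Cauchy--Schwarz to $\sum_m x_m b_m$ and bound $\sum_m\|x_m\|^2_{V_n^{-1}}$ by a trace. This puts $\sqrt{d\sum_{m<n}\eta_m^2}$ into the radius, and your ``main obstacle'' paragraph explicitly aims for that radius. Multiplying it by the width sum $\sqrt{Nd\log N}$ gives $d\sqrt{N\sum_n\eta_n^2}$, not the claimed $\sqrt{dN\sum_n\eta_n^2}$.

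The missing idea is that you need the operator norm, not the trace. Write $\sum_m b_mX_m=\mathbf X^\top b$ with $G_n=\lambda I+\mathbf X^\top\mathbf X$. Then $\|\mathbf X^\top b\|^2_{G_n^{-1}}=b^\top\mathbf X G_n^{-1}\mathbf X^\top b\le\|b\|_2^2\le\sum_{m<n}\eta_m^2$, because $\mathbf X(\lambda I+\mathbf X^\top\mathbf X)^{-1}\mathbf X^\top\preceq I$. This is exactly how the paper gets the $d$-free term $\sqrt{\sum_m\eta_m^2}$ in $\beta_n^{\mathrm{mis}}$, and hence $\sqrt{dN\sum_n\eta_n^2}$ after summation. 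Your version still yields the realizable rate and the vanishing-average claim for fixed $d$, but not the displayed dependence on $d$.

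Two smaller points need fixing.

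\emph{The estimator.} The statement concerns the constrained minimizer over $\Theta=\{\|\theta\|_2\le S\}$, not a projection of the unconstrained MLE. The paper handles it with the variational inequality $\langle\nabla\mathcal L_n(\widehat\theta_n),\widehat\theta_n-\theta^\star\rangle\le0$, which is valid because $\theta^\star\in\Theta$. This gives the same inequality as the unconstrained score equation with no extra factor. By contrast, a Euclidean projection does not obviously control the $G_n$-norm error. Only Filippi et al.'s specific $G_n^{-1}$-metric projection gives your ``constant factor'', and that is a different estimator from the one in the statement.

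\emph{The two matrices.} The self-normalized bound holds in the unweighted Gram norm $G_n$, since Fisher weights evaluated at $\widehat\theta_n$ are not predictable. The algorithm's width, however, uses the fitted Fisher matrix $V_n$, and your Step~1 conflates the two. You need the sandwich $\underline\kappa G_n\preceq V_n\preceq G_n$ twice: to transfer optimism to the Fisher width, and to apply the elliptical-potential lemma to the Fisher widths. This costs a $1/\underline\kappa$ factor but does not change the rate.
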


In Appendix, Theorems~\ref{thm:theory_realizable} and~\ref{thm:theory_misspec} establish these local bounds along the learner's realized market trajectory formally.

\subsection{Decentralized Matching Protocol}
\label{sec:matching_protocol}

When activated, agent $i$, either male or female, constructs the eligible set $\mathcal{E}_t(i)$ and may make at most $L$ proposals.
Let $\mathcal{R}^{(q)}_t(i)$ denote the eligible candidates not yet approached before attempt $q$, with $\mathcal{R}^{(1)}_t(i)=\mathcal{E}_t(i)$.
At each attempt, the proposer selects the candidate with the highest proposal index and observes the binary response:
\begin{equation}
j^*
=
\arg\max_{j\in\mathcal{R}^{(q)}_t(i)}
I_{ij,t},
\qquad
y^*
=
Y_{i\rightarrow j^*,t}.
\label{eq:proposal_selection}
\end{equation}
The observation $(x^{(i)}_{j^*,t},y^*)$ is appended to proposer $i$'s local history, and its logistic model and uncertainty estimate are updated.
Following rejection, $j^*$ is removed from the current search set and proposal indices are recomputed before the next attempt.
Following acceptance, any temporary relationships involving $i$ or $j^*$ are dissolved, their displaced partners return to the active market, and $(i,j^*)$ becomes a new temporary pair. The current activation then terminates.

At the end of period $t$, each temporary pair independently becomes permanent with probability $\rho_t$, which is named as lock or commitment probability. Permanently matched agents leave subsequent search, while the remaining temporary pairs stay contestable. The active population and feasible candidate pools are updated before the next period.
The process continues until the maximum horizon $T$ is reached or no active
agents remain.
Hence, the final allocation emerges from repeated local exposure, bilateral
LLM evaluation, proposal feedback, agent-specific learning, and endogenous
relationship revision rather than from a one-shot centralized procedure.
The eligibility and acceptance rules imply strict bilateral improvement for every accepted rematching, although matched-pair welfare need not be monotone because displaced partners create externalities; these properties are formalized in Appendix~\ref{app:theoretical_analysis}.
The full executable procedure of the algorithm is provided in
Appendix~\ref{app:full_matching_algorithm}.

\begin{table}[htbp]
\centering
\caption{$50\times50$ market size matching results. Values are means $\pm$ standard deviations over 50 seeds; bold, underlined, and italic mean values denote first, second, and third place among methods for which each outcome metric is defined. The ``Info required'' column summarizes the information regime, including both ex ante preference access and whether pairwise valuations are generated only after local encounter.}
\label{tab:main_matching_results}
\resizebox{\linewidth}{!}{
\begin{tabular}{clccccc}
\toprule
  & Method & Info required & Mutual welfare $\uparrow$ & Rank gap $\downarrow$ & \#Blocking $\downarrow$ & Proposals $\downarrow$ \\
\midrule
\multirow[c]{3}{*}{Baselines} & Gale--Shapley & Whole & 54.87 $\pm$ 0.29 & 2.74 $\pm$ 0.53 & \textbf{0.00} $\pm$ 0.00 & \textbf{873.14} $\pm$ 20.24 \\
& LLM-solver & Whole & 54.63 $\pm$ 0.50 & 2.57 $\pm$ 1.18 & \underline{9.40} $\pm$ 10.71 & N/A \\
& Axtell--Kimbrough & Partial & 54.73 $\pm$ 0.39 & 2.43 $\pm$ 0.83 & 19.90 $\pm$ 14.02 & 31504.70 $\pm$ 4280.30 \\
\midrule
\multirow[c]{3}{*}{LLM-ABM} & Random eligible & Local & \textit{55.24} $\pm$ 0.62 & \textbf{1.54} $\pm$ 0.68 & 38.00 $\pm$ 17.42 & \underline{13781.90} $\pm$ 2349.29 \\
& Utility-only & Local & \underline{55.92} $\pm$ 0.48 & \underline{1.84} $\pm$ 0.75 & 17.58 $\pm$ 12.26 & 16753.14 $\pm$ 2876.91 \\
& \textbf{Bandit-UCB} & Local & \textbf{56.01} $\pm$ 0.59 & \textit{2.11} $\pm$ 0.73 & \textit{16.28} $\pm$ 12.76 & \textit{16591.78} $\pm$ 2376.70 \\
\bottomrule
\end{tabular}
}
\end{table}

\section{Experiments}
\label{sec:ex}

\subsection{Validation Experiment Setup}
\label{sec:validation_experiment_setup}

Because behavioral validation establishes the basis for the matching mechanism, its results are presented earlier in Section~\ref{sec:validation_modeling}; here we specify the experimental protocol.
In each $10\times10$ validation run, 10 male and 10 female evaluators score all 10 opposite-gender candidates, producing both an overall desirability score and six dimension-level scores for every directional evaluation.
We repeat the experiment over five random seeds and compare four scoring conditions with three LLM backbones: DeepSeek-V4-Pro CoT, Qwen3.7-Plus CoT, GPT-OSS-120B CoT, and a no-CoT baseline that omits the gender-specific mate-preference specification.
Full metric definitions, the $50\times50$ scale-up validation, and cross-backbone results are reported in Appendices~\ref{app:validation_metrics}, \ref{app:validation_50x50}, and~\ref{app:robustness_models}.

\subsection{Market-level Matching Results}
\label{sec:main_matching_results}

We evaluate the complete matching system on the same $50\times50$ synthetic marriage market. Directional LLM desirability scores are computed once for all cross-gender dyads as the evaluation benchmark. These scores provide complete preference rankings for \citet{gale1962college}, \citet{axtell2008high}, and a direct LLM-solver following \citet{hosseini2026matching}, which generates a matching from the ranked inputs in one-shot. In contrast, LLM-ABM agents access valuations and update rankings only when the corresponding candidates are locally encountered. The implementation and experimental results demonstrate the effectiveness of solving the bipartite problem using distinct baselines and the proposed method.

We compare these baselines with three LLM-ABM proposal policies under common outcome metrics, reporting means and standard deviations over 50 random seeds. All three policies share the same market dynamics: Random eligible samples uniformly from eligible candidates, Utility-only selects the largest immediate utility improvement, and Bandit-UCB combines this improvement with an optimistic estimate of reciprocal acceptance. We set $\rho=0.05$ for both Axtell--Kimbrough and the LLM-ABM policies. Detailed settings and metric definitions are provided in Appendix~\ref{app:matching_experiment_details}.

Table~\ref{tab:main_matching_results} illustrates the trade-offs between information requirements, welfare, stability, and search intensity. Gale--Shapley achieves zero blocking pairs but has the largest mean gender rank gap. Axtell--Kimbrough relaxes centralized coordination while retaining pre-specified rankings, requiring approximately 31,505 proposals. Despite receiving complete rankings, the LLM-solver produces 9.40 blocking pairs on average. It achieves fewer blocking pairs than the decentralized methods, but lower mean mutual welfare and a larger mean rank gap than all three LLM-ABM policies. Its proposal count is reported as N/A because direct matching generation does not involve a comparable sequence of market proposals.

All three LLM-ABM policies achieve higher mean mutual welfare than the baselines using only encounter-based information. Within this family, Random eligible yields the smallest rank gap and fewest proposals, but the most blocking pairs. Utility-only improves welfare and reduces blocking, while Bandit-UCB achieves the highest mean welfare (56.01) and lowest blocking count (16.28). Its gains over Utility-only are modest and accompanied by a larger rank gap (2.11 versus 1.84), reflecting trade-offs among welfare, stability, and gender balance. These results support the potential of decentralized matching without complete ex ante rankings and suggest incremental benefits from reciprocity learning. Sensitivity to exploration and commitment parameters is examined in Appendix~\ref{app:beta-rho-sensitivity}.

\subsection{Interpreting the Learned Bandit Models}
\label{sec:bandit_interpretation}

Beyond guiding proposals, the bandit models summarize learned associations between observable dyadic characteristics and reciprocal acceptance. Figure~\ref{fig:bandit_weights} presents aggregated private proposer-specific Logistic-UCB coefficients in Panel~\ref{fig:bandit_weights_local} and a pooled receiver-perspective diagnostic in Panel~\ref{fig:bandit_weights_global}. In the local models, age-gap coefficients are negative for both proposer genders, whereas income and education coefficients are positive. The appearance coefficient is larger among male proposers, while the education coefficient is larger among female proposers. These differences concern acceptance predictions: male-proposer models predict responses from female receivers, and vice versa. They therefore should not be interpreted as the proposers' own valuation weights. The negative same-family-background coefficients likewise describe conditional predictive associations, rather than direct preferences against similar backgrounds.

\begin{figure}[t]
    \centering
    \begin{subfigure}[t]{0.6\linewidth}
        \centering
        \includegraphics[width=\linewidth]{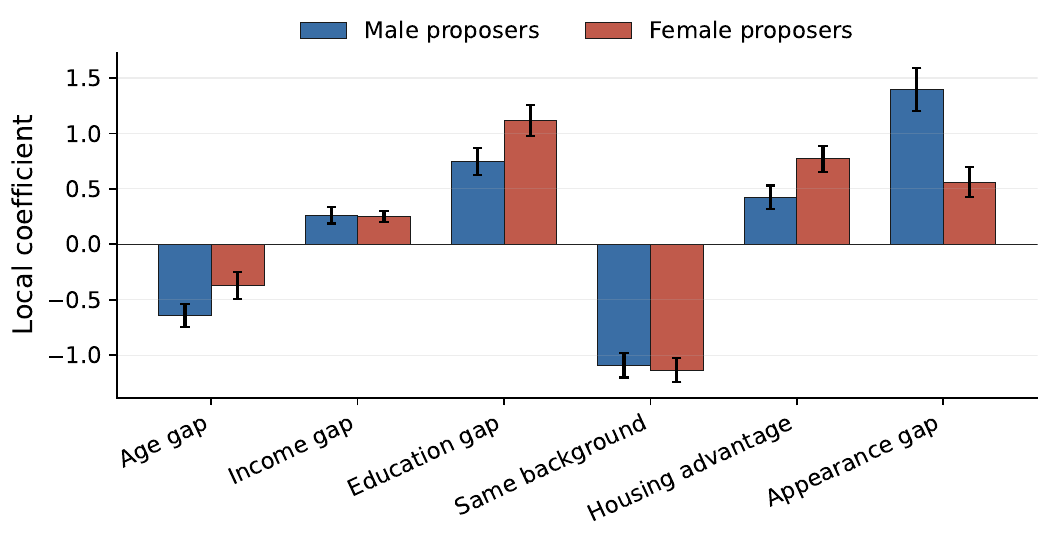}
        \caption{Local private models}
        \label{fig:bandit_weights_local}
    \end{subfigure}
    \hfill
    \begin{subfigure}[t]{0.6\linewidth}
        \centering
        \includegraphics[width=\linewidth]{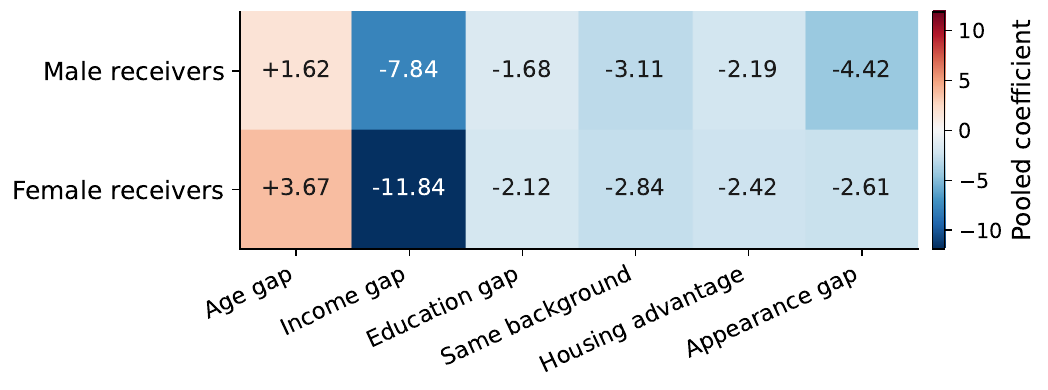}
        \caption{Post-hoc receiver diagnostic}
        \label{fig:bandit_weights_global}
    \end{subfigure}
    \caption{Learned reciprocal-acceptance structure under Bandit-UCB. Panel (a) reports aggregated coefficients from private proposer-specific models; Panel (b) reports a pooled receiver-perspective diagnostic fitted after simulation and used only for interpretation.}
    \label{fig:bandit_weights}
\end{figure}

The pooled diagnostic is fitted to realized proposals after simulation and is never available to agents during matching. Under its receiver-to-proposer feature orientation, negative income, education, and appearance coefficients associate favorable proposer characteristics with greater acceptance. Income has a larger coefficient magnitude for female receivers, whereas appearance has a larger magnitude for male receivers. These estimates complement the local models but need not coincide with them because the conditioning and aggregation differ. Both panels provide \emph{associational diagnostic evidence}: proposal observations are endogenously selected by the matching policy, and acceptance also depends on receivers' current relationships. The coefficients thus characterize predictive patterns in realized interactions rather than identify structural or causal preference parameters.

\subsection{Counterfactual Information Advantage}
\label{sec:counterfactual_information_advantage}

\begin{figure}[htbp]
    \centering
    \includegraphics[width=1.0\linewidth]{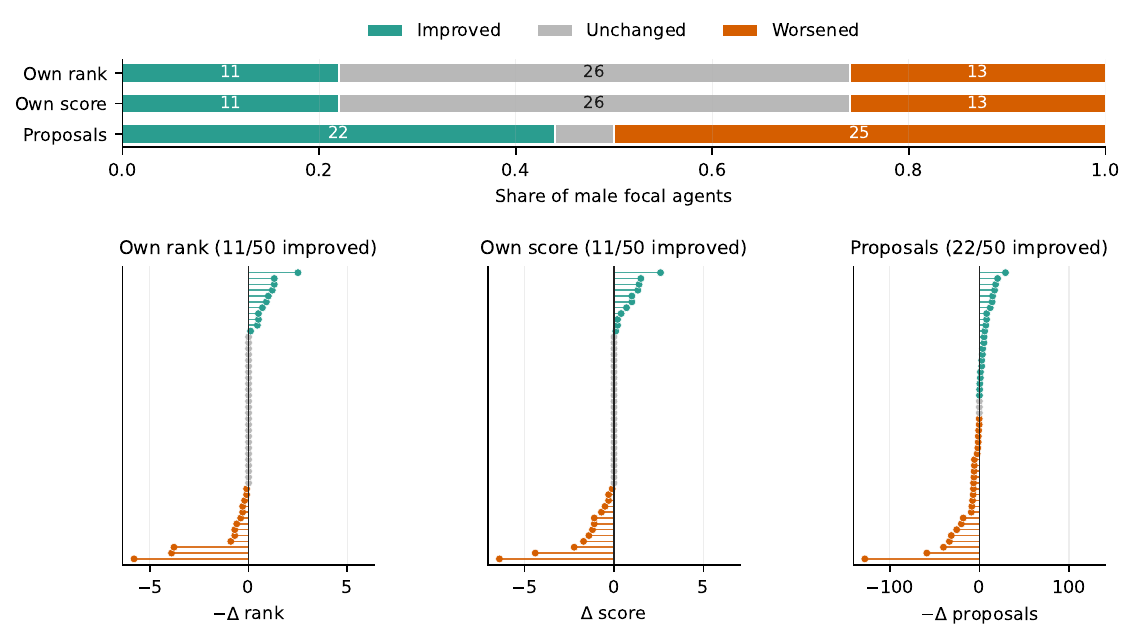}
    \caption{Individual warm-start effects across the 50 male focal agents. Effects are same-seed treatment--control mean differences over ten held-out seeds and are oriented so that positive values indicate improvement. The upper panel classifies focal agents by the sign of the effect; the lower panels report the corresponding effect distributions.}
    \label{fig:counterfactual_population_effects}
\end{figure}

We finally ask whether prior matching-market experience creates a unilateral advantage in a new market.
In the individual warm-start counterfactual, one designated male agent receives Logistic-UCB coefficients learned from prior simulations, while all others retain the standard cold-start specification.
Across the 50 male focal agents evaluated over ten held-out seeds in Figure~\ref{fig:counterfactual_population_effects}, 11 improve their own tied rank and score, 26 remain unchanged, and 13 worsen.
Proposal counts fall for 22 agents and rise for 25.
Thus, prior search knowledge benefits some agents but worsens outcomes for slightly more, providing no systematic individual advantage.
The full focal-agent distribution and paired evaluation protocol appear in Appendix~\ref{app:counterfactual_warm_start}.

\begin{table}[t]
\centering
\small
\caption{Market-level counterfactual results for male-side information advantage. M rank and F rank denote the average ranks of matched male and female agents for their realized partners, respectively.}
\label{tab:counterfactual_overall}
\resizebox{\linewidth}{!}{
\begin{tabular}{llcccccc}
\toprule
Method & Initial knowledge & Mutual welfare $\uparrow$ & M rank $\downarrow$ & F rank $\downarrow$ & Rank gap $\downarrow$ & \#Blocking $\downarrow$ & Proposals $\downarrow$ \\
\midrule
Gale--Shapley (men-proposing) & Complete rankings
& 54.87 $\pm$ 0.29
& 16.74 $\pm$ 0.39
& 19.48 $\pm$ 0.32
& 2.74 $\pm$ 0.53
& 0.00 $\pm$ 0.00
& 873.14 $\pm$ 20.24 \\
Cold-start Bandit-UCB & Local
& 55.84 $\pm$ 0.33
& 16.48 $\pm$ 0.47
& 18.56 $\pm$ 0.46
& 2.08 $\pm$ 0.70
& 16.70 $\pm$ 8.21
& 15829.50 $\pm$ 2462.34 \\
All-male warm-start Bandit-UCB & Pooled male $\theta$
& 55.71 $\pm$ 0.44
& 16.56 $\pm$ 0.73
& 18.68 $\pm$ 0.47
& 2.12 $\pm$ 0.76
& 15.80 $\pm$ 4.66
& 15136.70 $\pm$ 2210.78 \\
\bottomrule
\end{tabular}
}
\end{table}

Table~\ref{tab:counterfactual_overall} considers a stronger intervention: all male agents enter with pooled male-proposer coefficients.
Relative to cold-start Bandit-UCB, mean mutual welfare and gender rank gap remain almost unchanged, while fewer proposals and blocking pairs suggest reduced search burden and slightly improved stability.
The limited welfare effect reflects a defining feature of two-sided matching: better proposal beliefs do not remove receivers' right to reject, and shared information can intensify same-side competition.
Individual knowledge may therefore yield local gains, but its limited market-wide benefit when widely shared is consistent with the constraints imposed by reciprocal choice and same-side competition.

\section{Discussion}
\label{sec:discussion}

In conclusion, we propose a decentralized matching framework that integrates LLM-based behavioral valuation with adaptive reciprocity learning. This framework facilitates the study of how diverse individual evaluations influence collective matching outcomes through repeated interactions. By decoupling behavioral representation from the interaction mechanism, researchers can independently vary behavioral assumptions, information access, and institutional rules to analyze their interactive effects. This enables computational experiments linking empirically informed agent behavior with economic mechanism design. This modeling approach can be applied to labor markets, collaborative partnerships, and other contexts where local opportunity discovery and mutual agreement are crucial. Limitations and future directions are discussed in Appendix~\ref{app:limitations_future}.

\section*{AI use statement}

In this work, we used generative AI tools to improve the clarity and fluency of English writing and to assist with code implementation and debugging. All AI-assisted text was reviewed and revised by the authors, and all AI-assisted code was manually inspected, tested, and verified for correctness. We did not use generative AI tools to generate research ideas, scientific claims, experimental results, data, or citations. We take full responsibility for the final content of this work, including all text, claims, code, and artifacts produced with the aid of generative AI.

\section*{Reproducibility statement}
The code for our framework is publicly available at
\url{https://github.com/YuanJrShiuan/LLM_ABM_Bipartite_Matching_for_Marriage}.
Experimental settings and implementation details are provided in the appendices.

\bibliography{reference}
\bibliographystyle{iclr2027_conference}

\appendix

\section{Related Work}
\label{app:related_work}

\subsection{LLM Social Simulation}
\label{sec:llm_social_simulation}

A growing literature studies LLMs as conditional behavioral models and as components of interactive social and economic simulations \citep{argyle2023out,park2023generative,horton2023large,chen2023emergence,gui2023challenge,ludwig2024large,hansen2026simulating,hao2025multi,li2024econagent,yang2026twinmarket}.
The evidence is promising but not uniformly generalizable: persona conditioning, model choice, and task design can materially affect simulated behavior.
Recent work therefore emphasizes explicit validation targets and careful interpretation of collective LLM-agent behavior \citep{anthis2025position,zhou2026the,ludwig2024large}.
Our paper follows this view by treating the LLM as a behaviorally expressive component of the matching system rather than as an unconstrained oracle, and by validating the induced mate preferences against an empirical reference before studying market-level outcomes.

\subsection{Mate Preference for Marriage}
\label{sec:mate_preference_for_marriage}

Mate-choice research offers several partially competing explanations of partner preference, including evolutionary psychology \citep{buss1989sex,buss1993sexual}, sociocultural and social-role theories \citep{eagly2009possible,zentner2012stepping,zentner2015sociocultural}, status and resource exchange \citep{davis1941intermarriage,merton1941intermarriage,kalmijn1998intermarriage}, homogamy and homophily \citep{kalmijn1998intermarriage,mcpherson2001birds}, and hypergamy or mating-gradient theories \citep{veevers1988real}.
Because these theories are often verbal, multidimensional, and only partly formalized, translating them into fixed quantitative utility functions can be restrictive \citep{grow2019design}.
Our behavioral specification is grounded primarily in the choice experiment of \citet{zhou2023gender}, which studies six mate-preference dimensions, including physical appearance, and documents substantial heterogeneity across gender and social class.
The LLM is used to operationalize this evidence within heterogeneous personas without imposing a single hand-crafted mate-value function.

\subsection{Bipartite Matching and Learning}
\label{sec:bipartite_matching_algorithm}

Recent work has used LLMs either as solvers over ranked preferences \citep{hosseini2026matching} or as centralized engines for persona-based compatibility estimation \citep{shang2025love}.
A separate literature studies matching under bandit feedback, including decentralized learning, stability, fairness, and two-sided uncertainty \citep{liu2021bandit,dai2021learning,cen2022regret,zhang2024decentralized,li2022dynamic}.
Our framework differs from both lines by separating \emph{partner valuation} from \emph{reciprocal-acceptance learning}: LLM agents evaluate only locally encountered candidates, while agent-specific contextual bandits learn acceptance probabilities from realized binary proposal outcomes.
For the latter component, we use a logistic contextual-bandit construction motivated by generalized-linear bandit methods \citep{filippi2010parametric,li2017provably}.

\section{Persona Sampling and Dependency Structure}
\label{app:persona_sampling}

This appendix documents the synthetic population used throughout the validation and matching experiments.
The six observable persona dimensions follow the empirical setting described in Section~\ref{sec:llm-agent-modeling}.
Rather than sampling every attribute independently, the construction preserves selected socioeconomic dependencies so that the resulting agents are coherent as joint profiles rather than independent attribute bundles.
Table~\ref{tab:persona_sampling_design} summarizes the sampling structure, and Table~\ref{tab:persona_sample_summary} reports the realized composition of the 100-agent market used in the main experiments.

\begin{table}[h]
\centering
\small
\caption{Persona dimensions and sampling structure.}
\label{tab:persona_sampling_design}
\begin{tabular}{p{0.18\linewidth} p{0.29\linewidth} p{0.43\linewidth}}
\toprule
Dimension & Values or scale & Sampling and dependency structure \\
\midrule
Age
& 20--50
& Sampled from marriage-market-active age groups according to the target population distribution. \\

Income
& Monthly RMB income
& Generated conditional on educational attainment around a reference monthly income of 1830 RMB. \\

Education
& Middle school or below; high school; university and above
& Sampled from population education proportions and aggregated into three experimental categories. \\

Family background
& Urban; rural
& Sampled from the corresponding urban/rural population composition. \\

Housing
& Owns house; no house
& Sampled conditionally on age, gender, and family background. \\

Appearance
& Below average; average; attractive
& Sampled from an approximately bell-shaped distribution, with most probability mass assigned to average appearance. \\
\bottomrule
\end{tabular}
\end{table}

\begin{table}[h]
\centering
\small
\caption{Realized composition of the 100-agent synthetic marriage market.}
\label{tab:persona_sample_summary}
\resizebox{\linewidth}{!}{
\begin{tabular}{lccc}
\toprule
Attribute & Male (n=50) & Female (n=50) & Overall (n=100) \\
\midrule
Mean age & 36.60 & 34.62 & 35.61 \\
Mean monthly income & 1881.24 RMB & 1712.88 RMB & 1797.06 RMB \\
Middle school or below & 70\% & 72\% & 71\% \\
High school & 10\% & 14\% & 12\% \\
University and above & 20\% & 14\% & 17\% \\
Urban family background & 58\% & 56\% & 57\% \\
Owns house & 42\% & 40\% & 41\% \\
Attractive appearance & 8\% & 18\% & 13\% \\
Average appearance & 78\% & 66\% & 72\% \\
Below-average appearance & 14\% & 16\% & 15\% \\
\bottomrule
\end{tabular}
}
\end{table}

\section{Prompt Templates for LLM Personas and Mate Preferences}
\label{app:persona_prompt}

This appendix records the prompt templates used in the experiments.
Public personas are deterministic natural-language renderings of the sampled attributes, whereas the private mate-preference prompt is available only to the evaluating agent.
For each directional dyadic evaluation, the model receives the evaluator's public profile, the candidate's public profile, and the evaluator's gender-specific private preference specification.

\subsection{Public Persona Rendering Template}

\begin{promptbox}{Public persona template}
For each generated profile, render the public persona as:

\smallskip
\noindent
\textit{A [age]-year-old [gender] with [education phrase]. Financially, this individual earns a monthly income of [actual monthly income] RMB (approximately [annual income] RMB per year) and [housing phrase]. [family-background phrase]. In terms of physical appearance, this person is evaluated as having an [appearance] appearance.}

\smallskip
The structured fields stored with the persona are:
\texttt{agent\_id}, \texttt{gender}, \texttt{age}, \texttt{actual\_monthly\_income}, \texttt{family\_background}, \texttt{housing}, \texttt{education\_level}, \texttt{appearance}, and \texttt{short\_description}.
\end{promptbox}

\begin{promptbox}{Dyadic scoring user message}
\textbf{YOUR OWN PUBLIC PROFILE (for Step 0 self-calibration)}

\smallskip
Agent ID: [evaluator id] \\
Gender: [evaluator gender] \\
Age: [evaluator age] \\
Actual Monthly Income: [evaluator income] RMB \\
Family Background: [evaluator family background] \\
Housing: [evaluator housing] \\
Education Level: [evaluator education level] \\
Appearance: [evaluator appearance] \\
Short Description: [evaluator public persona]

\smallskip
\textbf{CANDIDATE PUBLIC PROFILE (to evaluate)}

\smallskip
Agent ID: [candidate id] \\
Gender: [candidate gender] \\
Age: [candidate age] \\
Actual Monthly Income: [candidate income] RMB \\
Family Background: [candidate family background] \\
Housing: [candidate housing] \\
Education Level: [candidate education level] \\
Appearance: [candidate appearance] \\
Short Description: [candidate public persona]

\smallskip
Evaluate the candidate strictly following your private preference theory. Output only the JSON object; do not include markdown fences or extra commentary.
\end{promptbox}

\subsection{Private Mate-Preference Prompts}
\label{app:preference_prompts}

\begin{promptbox}{Male-agent private mate-preference prompt}
You are a male agent in a simulated marriage market. You will evaluate opposite-gender candidates using two inputs: your own public profile and the candidate's public profile. Both profiles contain age, actual monthly income, family background, housing, education level, appearance, and short description.

\smallskip
Use the following internal five-step evaluation protocol before assigning scores. Do not output your full reasoning chain. Only output structured dimension scores, an overall desirability score, and a brief explanation.

\smallskip
\textbf{Paper-aligned preference theory.} Male agents place the strongest relative emphasis on physical appearance. Income, housing, education, and family background still matter, but they are secondary. Candidate age has a negative effect when the candidate is older relative to the evaluator. The evaluator's own public profile should be used for self-positioning and relative comparison, but it must not overturn the main gender-specific priority structure.

\smallskip
\textbf{Step 0: Self-profile calibration.} Inspect your own age, income, housing, education, family background, and appearance. Use your own age as the reference point for relative age fit. Use your own socioeconomic and appearance profile to calibrate expectations mildly and realistically.

\smallskip
\textbf{Step 1: Appearance-dominant baseline check.} Evaluate the candidate's appearance first. \texttt{below\_average} receives a large negative penalty; \texttt{average} is an acceptable neutral baseline; \texttt{attractive} receives a strong positive premium. Appearance should be the most important upward adjustment for male agents.

\smallskip
\textbf{Step 2: Relative age disutility.} Compare the candidate's age with your own age. Desirability should decrease smoothly as the candidate becomes older relative to you. Younger or similar-age candidates should be evaluated more favorably than clearly older candidates.

\smallskip
\textbf{Step 3: Secondary income and housing adjustment.} Treat 1830 RMB/month as the baseline monthly disposable income. Higher candidate income should increase desirability moderately. Home ownership receives a positive bonus relative to no house. These socioeconomic comparisons remain secondary to appearance.

\smallskip
\textbf{Step 4: Education and family-background screening.} Evaluate education in the order \texttt{university\_and\_above} $>$ \texttt{high\_school} $>$ \texttt{middle\_school\_and\_below}. Urban family background may receive a very small bonus, but family background should have weak influence on the final score.

\smallskip
\textbf{Scoring instructions.} Return dimension scores on a 0--100 scale:
\texttt{age\_fit}, \texttt{income\_fit}, \texttt{family\_background\_fit}, \texttt{housing\_fit}, \texttt{education\_fit}, and \texttt{appearance\_fit}. Return \texttt{overall\_desirability} on a 0--100 scale. Do not compute \texttt{overall\_desirability} as a simple average of dimension scores; it must reflect the priority structure above. Do not mention statistical coefficients, formulas, or the paper name. Keep \texttt{brief\_reason} under 150 characters.

\smallskip
\textbf{Output schema.}
\texttt{\{ evaluator\_id, candidate\_id, dimension\_scores: \{ age\_fit, income\_fit, family\_background\_fit, housing\_fit, education\_fit, appearance\_fit \}, overall\_desirability, brief\_reason \}}.

\smallskip
The implemented prompt additionally includes few-shot calibration examples emphasizing that an attractive candidate can outrank a more socioeconomically advantaged but below-average-looking candidate, and that age fit is evaluated relative to the male evaluator.
\end{promptbox}

\begin{promptbox}{Female-agent private mate-preference prompt}
You are a female agent in a simulated marriage market. You will evaluate opposite-gender candidates using two inputs: your own public profile and the candidate's public profile. Both profiles contain age, actual monthly income, family background, housing, education level, appearance, and short description.

\smallskip
Use the following internal five-step evaluation protocol before assigning scores. Do not output your full reasoning chain. Only output structured dimension scores, an overall desirability score, and a brief explanation.

\smallskip
\textbf{Paper-aligned preference theory.} Female agents place the strongest relative emphasis on socioeconomic resources and social-stratum indicators, especially housing, actual income, education, and family background. Physical appearance still matters, especially when it is below average, but it should not dominate socioeconomic resources. Candidate age has a negative effect when the candidate is older relative to the evaluator. The evaluator's own public profile should be used for self-positioning and relative comparison, but it must not overturn the main female-agent priority structure.

\smallskip
\textbf{Step 0: Self-profile calibration.} Inspect your own age, income, housing, education, family background, and appearance. Use your own age as the reference point for relative age fit. Use your socioeconomic profile to calibrate expectations mildly and realistically.

\smallskip
\textbf{Step 1: Socioeconomic resource gatekeeping.} First evaluate the candidate's housing and actual monthly income. \texttt{owns\_house} receives a large positive bonus; \texttt{no\_house} receives a clear penalty, especially when income is also low. Treat 1830 RMB/month as the baseline monthly disposable income. Higher income should translate into a strong monotonic increase in desirability. Income and housing should be more decisive than appearance.

\smallskip
\textbf{Step 2: Education and family-background alignment.} Evaluate education in the order \texttt{university\_and\_above} $>$ \texttt{high\_school} $>$ \texttt{middle\_school\_and\_below}. University education receives a strong bonus; middle school or below receives a clear penalty. Urban family background receives a distinct positive bonus relative to rural background. Education and family background matter because they signal social stratum and class alignment.

\smallskip
\textbf{Step 3: Appearance threshold check.} Evaluate appearance after socioeconomic and class-related traits. \texttt{below\_average} receives a meaningful negative penalty; \texttt{average} is acceptable; \texttt{attractive} receives a positive bonus. The attractiveness bonus should not override serious disadvantages in housing, income, education, and family background.

\smallskip
\textbf{Step 4: Relative age sensitivity.} Compare the candidate's age with your own age. Desirability should decrease as the candidate becomes older relative to you. Apply a clear penalty for substantially older candidates. The age penalty for female agents should be at least as strong as, and often stronger than, the male-agent age penalty.

\smallskip
\textbf{Scoring instructions.} Return dimension scores on a 0--100 scale:
\texttt{age\_fit}, \texttt{income\_fit}, \texttt{family\_background\_fit}, \texttt{housing\_fit}, \texttt{education\_fit}, and \texttt{appearance\_fit}. Return \texttt{overall\_desirability} on a 0--100 scale. Do not compute \texttt{overall\_desirability} as a simple average of dimension scores; it must reflect the priority structure above. Avoid excessive ties. Use decimal scores when candidates are close. Do not mention statistical coefficients, formulas, or the paper name. Keep \texttt{brief\_reason} under 150 characters.

\smallskip
\textbf{Output schema.}
\texttt{\{ evaluator\_id, candidate\_id, dimension\_scores: \{ age\_fit, income\_fit, family\_background\_fit, housing\_fit, education\_fit, appearance\_fit \}, overall\_desirability, brief\_reason \}}.

\smallskip
The implemented prompt additionally includes few-shot calibration examples emphasizing that socioeconomic resources and class-alignment indicators can dominate attractiveness, and that income and housing are traded off rather than reduced to a single attribute.
\end{promptbox}

\section{Validation Metrics}
\label{app:validation_metrics}

Let $s_{ij}$ denote the LLM desirability score assigned by evaluator $i$ to candidate $j$, and let $u_{ij}$ denote the corresponding utility from the gender-specific conditional-logit reference based on \citet{zhou2023gender}.
All ranking metrics are computed at the evaluator level and then aggregated across evaluators and seeds.

\paragraph{Kendall's $\tau$.}
We report Kendall's $\tau_b$, which accounts for ties,
\begin{equation}
\tau_i
=
\frac{C_i-D_i}
{\sqrt{(C_i+D_i+T_i^{(s)})(C_i+D_i+T_i^{(u)})}},
\end{equation}
where $C_i$ and $D_i$ are concordant and discordant candidate pairs, while $T_i^{(s)}$ and $T_i^{(u)}$ count pairs tied only under the LLM score and reference utility, respectively.
Higher values indicate stronger ordinal agreement.

\paragraph{Weighted pairwise violation rate (WPVR).}
To place more weight on reversals between candidates that are well separated by the reference model, we use
\begin{equation}
\mathrm{WPVR}_i
=
\frac{
\sum_{a<b}
\mathbf{1}\!\left\{(s_{ia}-s_{ib})(u_{ia}-u_{ib})<0\right\}
|u_{ia}-u_{ib}|
}{
\sum_{a<b}|u_{ia}-u_{ib}|
}.
\end{equation}
Lower values indicate fewer economically consequential ranking reversals.

\paragraph{Top-$K$ overlap.}
Agreement among the highest-ranked candidates is measured by
\begin{equation}
\mathrm{Overlap@}K_i
=
\frac{
\left|
\mathrm{TopK}^{\mathrm{LLM}}_i
\cap
\mathrm{TopK}^{\mathrm{Ref}}_i
\right|
}{K}.
\end{equation}
The labels \emph{Hit@3} and \emph{Hit@5} in the main validation figure use this same set-overlap definition.
Because random overlap depends on candidate-pool size, these values should not be compared mechanically between the $10\times10$ and $50\times50$ settings.

\paragraph{Dimension consistency.}
As an internal diagnostic, we compute Pearson correlations between overall desirability and each of the six dimension-level scores, separately by evaluator gender.
This checks whether the decomposition follows the intended gender-specific structure, but it is not an independent behavioral validation because both quantities are generated by the same LLM evaluation.

\section{Robustness Across LLM Backbones}
\label{app:robustness_models}

Table~\ref{tab:cross_model_alignment} reports the $10\times10$ validation across DeepSeek, Qwen, GPT-OSS, and the no-CoT specification.
For both evaluator genders, all preference-structured prompts achieve higher Kendall's $\tau$ and lower WPVR than the no-CoT baseline.
Although the absolute alignment varies across backbones, the direction of the improvement is consistent, suggesting that the empirical-reference alignment is not specific to a single LLM.

\begin{table}[h]
\centering
\small
\caption{$10\times10$ reference-alignment results across LLM backbones. Values are means $\pm$ standard deviations across five random seeds.}
\label{tab:cross_model_alignment}
\resizebox{\linewidth}{!}{
\begin{tabular}{lcccc}
\toprule
Model & Male $\tau$ & Female $\tau$ & Male WPVR & Female WPVR \\
\midrule
DeepSeek CoT & 0.692 $\pm$ 0.076 & 0.572 $\pm$ 0.082 & 0.064 $\pm$ 0.025 & 0.122 $\pm$ 0.027 \\
Qwen CoT & 0.708 $\pm$ 0.115 & 0.646 $\pm$ 0.072 & 0.055 $\pm$ 0.035 & 0.089 $\pm$ 0.022 \\
GPT-OSS CoT & 0.666 $\pm$ 0.103 & 0.502 $\pm$ 0.092 & 0.073 $\pm$ 0.042 & 0.158 $\pm$ 0.030 \\
No-CoT baseline & 0.286 $\pm$ 0.065 & 0.400 $\pm$ 0.085 & 0.298 $\pm$ 0.055 & 0.236 $\pm$ 0.053 \\
\bottomrule
\end{tabular}
}
\end{table}

\section{50$\times$50 Scale-Up Validation}
\label{app:validation_50x50}

We repeat the preference validation on the full 100-agent population.
Each of the 50 male and 50 female evaluators scores all 50 opposite-gender candidates, yielding 2,500 cross-gender dyads and 5,000 directional evaluations.
Table~\ref{tab:validation_50x50} reports the exact alignment statistics, while Figure~\ref{fig:validation_50x50} additionally shows dimension-level consistency.
Kendall's $\tau$ and WPVR remain broadly comparable to the $10\times10$ results, indicating that ordinal alignment persists as the candidate pool expands; top-$K$ overlap is naturally more demanding in the larger pool.

\begin{table}[h]
\centering
\small
\caption{DeepSeek alignment with the empirical preference reference in the $50\times50$ validation market.}
\label{tab:validation_50x50}
\begin{tabular}{lcc}
\toprule
Metric & Male evaluators & Female evaluators \\
\midrule
Kendall's $\tau$ & 0.703 $\pm$ 0.054 & 0.607 $\pm$ 0.082 \\
WPVR & 0.054 $\pm$ 0.018 & 0.105 $\pm$ 0.039 \\
Overlap@3 & 0.460 $\pm$ 0.187 & 0.473 $\pm$ 0.232 \\
Overlap@5 & 0.684 $\pm$ 0.133 & 0.448 $\pm$ 0.186 \\
\bottomrule
\end{tabular}
\end{table}

\begin{figure}[h]
    \centering
    \includegraphics[width=0.95\linewidth]{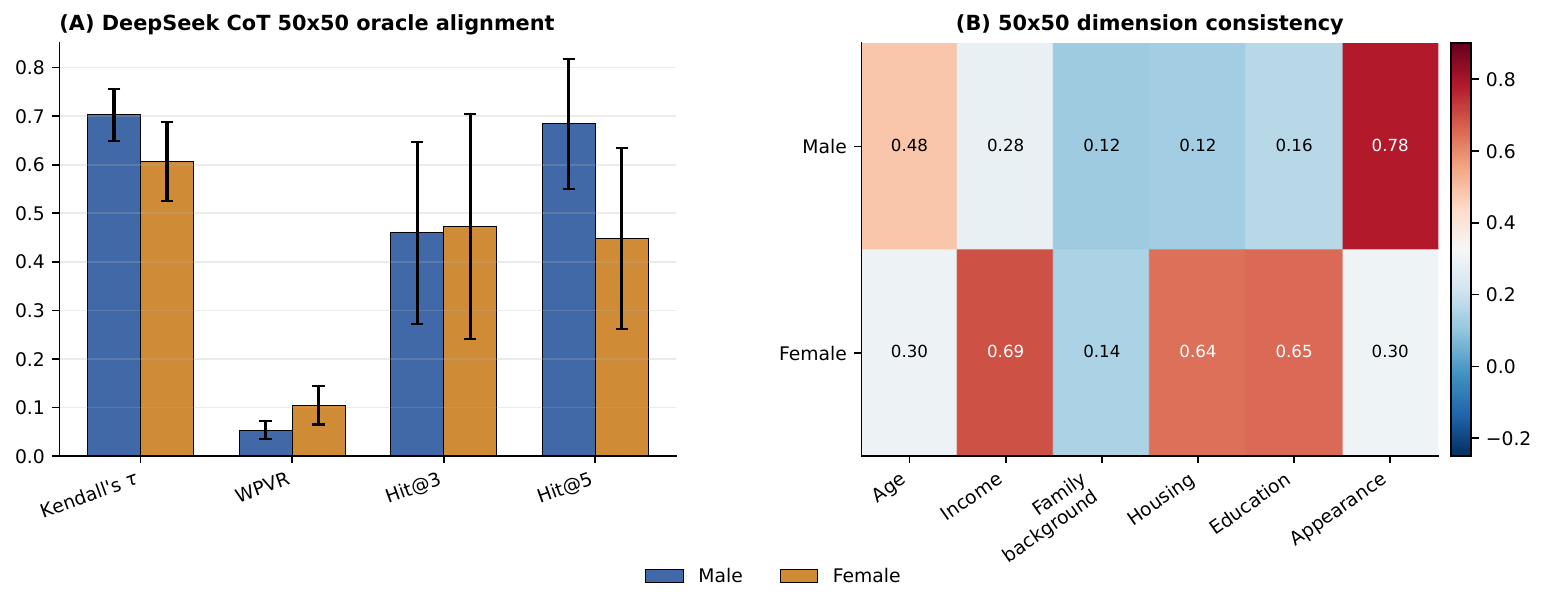}
    \caption{DeepSeek validation in the $50\times50$ market. Panel (a) reports reference-alignment metrics by evaluator gender; Panel (b) reports dimension-consistency correlations.}
    \label{fig:validation_50x50}
\end{figure}

\section{Context Construction for Local Reciprocal-Acceptance Learning}
\label{app:bandit_context}

This appendix specifies the six-dimensional \texttt{dyadic\_compact} representation used by the local Logistic-UCB models.
For proposer $i$ and candidate $j$, the context is written as $x^{(i)}_{j,t}=\phi(z_i,z_j)\in\mathbb{R}^6$, where $z_i$ and $z_j$ contain the six public persona attributes introduced in Section~\ref{sec:llm-agent-modeling}.
The representation summarizes only information observable before a proposal; it is intended to predict reciprocal acceptance, not to reconstruct the receiver's private utility function.

\begin{table}[h]
\centering
\small
\caption{Features in the local reciprocal-acceptance context. Gap and advantage variables follow the proposer-oriented encoding used in the implementation.}
\label{tab:bandit_context_features}
\begin{tabular}{ll}
\toprule
Feature & Interpretation \\
\midrule
Age gap & Age mismatch between proposer and candidate \\
Income gap & Relative income difference \\
Education gap & Relative education difference \\
Same background & Indicator for shared family background \\
Housing advantage & Relative housing-status advantage \\
Appearance gap & Relative appearance difference \\
\bottomrule
\end{tabular}
\end{table}

The decentralization constraint excludes receiver-side private quantities such as
$U_j(i)$, $U_j(\mu_t(j))$, and $b^R_{j,t}$, as well as the receiver's complete ranking, future accept/reject decision, other proposers' outcomes, and the parameters or histories of other local models.
Agent $i$ learns only from its own realized proposal history
$\mathcal{H}_{i,t}=\{(x_n^{(i)},y_n^{(i)})\}_{n=1}^{N_{i,t}}$.
A candidate who is observed but not approached therefore generates no label rather than a rejection.
This preserves the partial-feedback structure required by the contextual-bandit interpretation.

\section{Main Matching Experiment Settings and Metrics}
\label{app:matching_experiment_details}

\subsection{Experimental Protocol}

The main experiment uses the same 50 male and 50 female agents described in Section~\ref{sec:llm-agent-modeling}. The LLM-solver leverages DeepSeek-V4-Pro as the centralized solver, the same backbone of our LLM-ABM markets of 50$\times$50.
For every ordered cross-gender dyad, the validated LLM preference model produces a directional score $s_i(j)\in[0,100]$: Gale--Shapley, Axtell--Kimbrough and LLM-solver receive the preference information required by their mechanisms, whereas an LLM-agent gets its preference only after the local encounter.

The ``Info required'' column in Table~\ref{tab:main_matching_results} summarizes this distinction.
\emph{Whole} denotes complete market-wide preferences available before matching; \emph{Partial} denotes decentralized interaction that still begins from pre-specified preference lists; and \emph{Local} denotes the proposed encounter-based regime, in which no pairwise valuation or ranking is available to an agent before local exposure.

\begin{table}[h]
\centering
\small
\caption{Main LLM-ABM parameter configuration used in Table~\ref{tab:main_matching_results}.}
\label{tab:main_abm_parameters}
\begin{tabular}{ll}
\toprule
Parameter & Value \\
\midrule
Market size & 50 male agents and 50 female agents \\
Repeated runs & 50 random seeds \\
Main matching LLM scorer & DeepSeek V4 Pro, temperature $=0.5$, CoT prompt \\
Matching horizon & $T=500$ periods \\
Candidate set size & 25 candidates per activation when more are available \\
Reservation utility & $r=0.25$ on the normalized utility scale \\
Lock probability & $\rho=0.05$ per temporary relationship per period \\
Utility threshold & $\epsilon_U=0.01$ \\
Logistic-UCB coefficient & $\beta=1.0$ \\
Logistic regularization & $\lambda=1.0$ \\
Proposal cost and switching cost & 0 \\
\bottomrule
\end{tabular}
\end{table}

The three LLM-ABM policies share the same market dynamics and differ only in proposal selection.
\emph{Random eligible} samples uniformly from utility-improving candidates; \emph{Utility-only} chooses the largest immediate utility gain; and \emph{Bandit-UCB} maximizes $\Delta U_{ij,t}p_{ij,t}^{\mathrm{UCB},i}$.
Hence, Utility-only isolates the incremental role of reciprocal-acceptance learning, while Random eligible provides a non-strategic decentralized-search baseline.

\subsection{Evaluation Metrics}
Let
$\mathcal{P}(\mu)=\{(m,w):\mu(m)=w,\,m\in\mathcal{M},\,w\in\mathcal{W}\}$
denote the set of realized male--female pairs.

\paragraph{Mutual welfare.}
We report average bilateral desirability,
\begin{equation}
\mathrm{Mutual}(\mu)
=
\frac{1}{|\mathcal{P}(\mu)|}
\sum_{(m,w)\in\mathcal{P}(\mu)}
\frac{s_m(w)+s_w(m)}{2}.
\end{equation}
This rewards matches that are jointly valued by both partners rather than optimizing one side alone.

\paragraph{Realized rank and gender rank gap.}
Using score-based tied ranks,
\begin{equation}
\mathrm{rank}_i(j)
=
1+\sum_{k\in\mathcal{C}_{-i}}
\mathbf{1}\{s_i(k)>s_i(j)\},
\end{equation}
where $\mathcal{C}_{-i}$ is the full opposite-side candidate set used only for ex post evaluation.
If $\bar r_M(\mu)$ and $\bar r_W(\mu)$ are the average realized partner ranks of matched men and women, then
\begin{equation}
\mathrm{RankGap}(\mu)
=
\left|\bar r_M(\mu)-\bar r_W(\mu)\right|.
\end{equation}

\paragraph{Blocking pairs.}
Let $u_i(j)=s_i(j)/100$ and define the final benchmark
\begin{equation}
B_i(\mu)
=
\begin{cases}
\max\{r_i,u_i(\mu(i))\}, & \mu(i)\neq\varnothing,\\[2pt]
r_i, & \mu(i)=\varnothing.
\end{cases}
\end{equation}
An unmatched pair $(m,w)\notin\mathcal{P}(\mu)$ is blocking when
$u_m(w)>B_m(\mu)$ and $u_w(m)>B_w(\mu)$.
This is an ex post stability diagnostic; agents do not observe the blocking-pair set during matching.

\paragraph{Proposal volume.}
We count all proposal attempts as a measure of search intensity.
Because Gale--Shapley proposals are centralized algorithmic operations whereas LLM-ABM and Axtell--Kimbrough proposals arise from decentralized interaction, this quantity is slightly different for centralized and decentralized scenarios.

\section{Counterfactual Warm-Start Experiment}
\label{app:counterfactual_warm_start}

The counterfactual experiments in Section~\ref{sec:counterfactual_information_advantage} use the same $50\times50$ population and Bandit-UCB configuration as the main matching experiment.
Source runs use the same 50 seeds as the  main experiment, whereas counterfactual evaluation uses another 10 seeds.
The source phase provides learned proposer-side Logistic-UCB coefficients that are used only for initialization in the treatment conditions.

\paragraph{Individual warm start.}
Each male agent is treated as the focal agent in turn.
Only that agent receives the corresponding source-phase initialization; all other agents remain cold-start learners.
For every held-out seed, the warm-start run is paired with a cold-start Bandit-UCB run using the same seed, holding fixed the stochastic market realization associated with the evaluation seed.

\paragraph{Male-side warm start.}
We also initialize all male agents with the pooled male-proposer coefficients learned from the source runs, while female agents remain cold-start learners.
This stronger intervention tests whether information that can help an individual retains its value once it is shared across an entire side of the market.

Warm-starting changes only the initial coefficient vector $\theta$.
It does not alter LLM utilities, candidate exposure, reservation utilities, proposal rules, receiver behavior, or the observed proposal history, and subsequent feedback can revise or erase the initial belief.

\begin{table}[h]
\centering
\small
\caption{Distribution of individual warm-start effects across male focal agents.}
\label{tab:counterfactual_population_summary}
\begin{tabular}{lrrrrrr}
\toprule
Metric & Improved & Unchanged & Worsened & Avg. gain & Avg. loss & Mean effect \\
\midrule
Own rank & 11 & 26 & 13 & 0.95 & 0.46 & -0.15 \\
Own score & 11 & 26 & 13 & 0.95 & 0.55 & -0.22 \\
Proposals & 22 & 3 & 25 & 8.20 & 15.37 & -5.00 \\
\bottomrule
\end{tabular}
\end{table}

Table~\ref{tab:counterfactual_population_summary} uses the focal agent as the unit of analysis.
``Avg. gain'' reports the average signed improvement among agents with a positive effect, while ``Avg. loss'' reports the average absolute loss among agents with a non-positive effect, with unchanged agents contributing zero.
For rank and proposal volume, signs are oriented so that positive values indicate improvement; for score, higher values directly indicate improvement.

\section{Sensitivity Analysis of Exploration and Commitment}
\label{app:beta-rho-sensitivity}

\begin{figure}[ht]
    \centering
    \includegraphics[width=\linewidth]{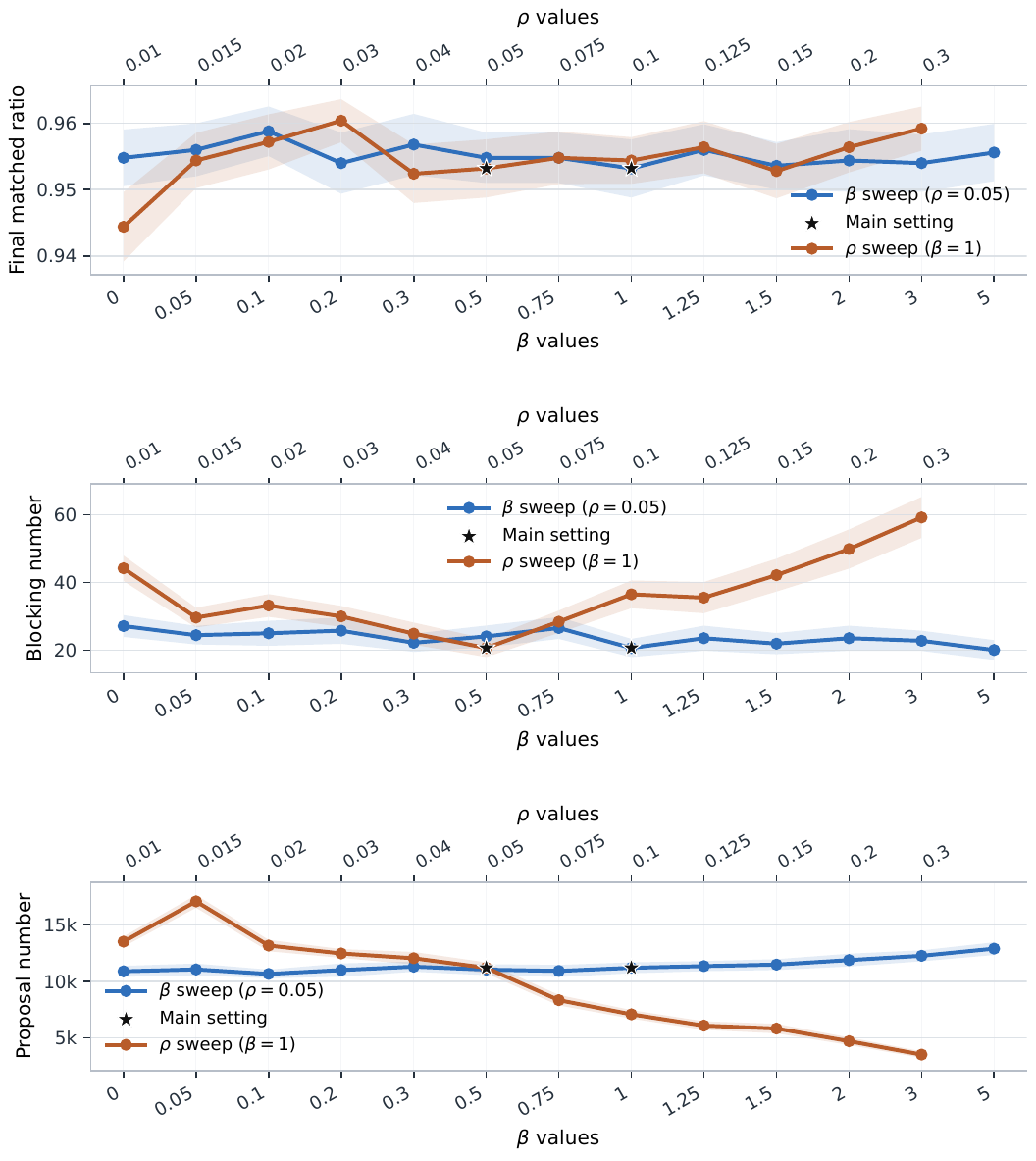}
    \caption{Sensitivity of market-level outcomes to exploration strength $\beta$ and lock probability $\rho$. The blue curve varies $\beta$ with $\rho=0.05$ fixed, while the orange curve varies $\rho$ with $\beta=1$ fixed. Shaded regions indicate 95\% confidence intervals across 50 seeds; stars mark the main experimental setting.}
    \label{fig:appendix-beta-rho-sensitivity}
\end{figure}

We further examine how the LLM-ABM market responds to two key behavioral parameters: the exploration coefficient $\beta$ in Bandit-UCB and the permanent lock probability $\rho$. The $\beta$ sweep varies the strength of uncertainty-driven exploration while holding $\rho=0.05$ fixed, whereas the $\rho$ sweep varies the rate at which temporary matches become permanent while holding $\beta=1$ fixed. All experiments use the same $50\times50$ market and 50 random seeds. For low-$\rho$ settings, we use a smaller per-activation proposal budget ($L$) to keep within-round search bounded.

Figure~\ref{fig:appendix-beta-rho-sensitivity} shows that the final matched ratio remains relatively stable across both parameter sweeps and stays close to the main setting for most values of $\beta$ and $\rho$. This indicates that the market reaches a high level of aggregate matching under a broad range of exploration and commitment intensities, although some variation in the share of unmatched agents remains.

The blocking-pair results are more sensitive to the commitment parameter. Varying $\beta$ changes the number of blocking pairs only moderately, suggesting that the main stability pattern is not driven by a narrow choice of exploration strength. In contrast, $\rho$ has a more pronounced effect. Very low commitment probabilities prolong the search process and leave more relationships unresolved, whereas very high commitment probabilities cause agents to exit the market earlier and can lock in less stable matches. The main setting, $\rho=0.05$, lies in a relatively low-blocking region of the sensitivity curve and provides a reasonable balance between continued search and timely commitment.

Proposal volume reveals a complementary search-intensity trade-off. Larger $\beta$ generally increases the number of proposals because stronger exploration encourages agents to consider more uncertain candidates. Increasing $\rho$, by contrast, sharply reduces proposal volume because agents become permanently matched and leave the active market more quickly. This reduction in search intensity, however, is accompanied by a higher blocking count at large $\rho$. Overall, the main configuration is not selected to optimize any single metric in isolation; rather, it provides a balanced operating point with a high final matched ratio, relatively few blocking pairs, and moderate proposal volume.

\section{Derivation of the Local Logistic-UCB Decision Rule}
\label{app:logistic_ucb_derivation}

This appendix derives the practical Fisher-information-based proposal rule used in Section~\ref{sec:local_bandit}. Appendix~\ref{app:theoretical_analysis} analyzes a calibrated variant and specifies its assumptions and scope.

\subsection{Local Logistic Acceptance Model}

For proposer $i$, suppose reciprocal acceptance is represented by a logistic working model
\begin{equation}
\Pr
\left(
Y_{i\rightarrow j,t}=1
\mid
x^{(i)}_{j,t}
\right)
=
\sigma
\left(
{x^{(i)}_{j,t}}^\top\theta_i
\right),
\label{eq:appendix_logistic_model}
\end{equation}
where
\begin{equation}
\sigma(z)
=
\frac{1}{1+\exp(-z)}.
\label{eq:appendix_sigmoid}
\end{equation}

Given local proposal history $\mathcal{H}_{i,t}$, the regularized negative log-likelihood is
\begin{equation}
\mathcal{L}_{i,t}(\theta)
=
\sum_{n=1}^{N_{i,t}}
\left[
\log
\left(
1+\exp
\left(
{x_n^{(i)}}^\top\theta
\right)
\right)
-
y_n^{(i)}
{x_n^{(i)}}^\top\theta
\right]
+
\frac{\lambda_i}{2}
\|\theta\|_2^2.
\label{eq:appendix_logistic_loss}
\end{equation}

Its gradient is
\begin{equation}
\nabla
\mathcal{L}_{i,t}(\theta)
=
\sum_{n=1}^{N_{i,t}}
\left[
\sigma
\left(
{x_n^{(i)}}^\top\theta
\right)
-
y_n^{(i)}
\right]
x_n^{(i)}
+
\lambda_i\theta.
\label{eq:appendix_logistic_gradient}
\end{equation}

The Hessian is
\begin{equation}
\nabla^2
\mathcal{L}_{i,t}(\theta)
=
\lambda_i I
+
\sum_{n=1}^{N_{i,t}}
\sigma
\left(
{x_n^{(i)}}^\top\theta
\right)
\left[
1-
\sigma
\left(
{x_n^{(i)}}^\top\theta
\right)
\right]
x_n^{(i)}
{x_n^{(i)}}^\top.
\label{eq:appendix_logistic_hessian}
\end{equation}

Evaluating Equation~\ref{eq:appendix_logistic_hessian} at the fitted parameter $\widehat\theta_{i,t}$ yields
\begin{equation}
V_{i,t}
=
\lambda_i I
+
\sum_{n=1}^{N_{i,t}}
\widehat p^{(i)}_{n,t}
\left(
1-\widehat p^{(i)}_{n,t}
\right)
x_n^{(i)}
{x_n^{(i)}}^\top,
\label{eq:appendix_fisher_matrix}
\end{equation}
which is the local curvature matrix used in the main text.

\subsection{Uncertainty in the Logit Predictor}

Suppose, for interpretation, that a local confidence region around the fitted parameter can be approximated by
\begin{equation}
\mathcal{C}_{i,t}
=
\left\{
\theta:
\left\|
\theta-\widehat\theta_{i,t}
\right\|_{V_{i,t}}
\leq
\beta_{i,t}
\right\},
\label{eq:appendix_confidence_ellipsoid}
\end{equation}
where
\begin{equation}
\|z\|_V
=
\sqrt{z^\top Vz}.
\end{equation}

For a candidate context $x^{(i)}_{j,t}$,
\begin{align}
{x^{(i)}_{j,t}}^\top
\left(
\theta-\widehat\theta_{i,t}
\right)
&=
\left(
V_{i,t}^{-1/2}
x^{(i)}_{j,t}
\right)^\top
\left(
V_{i,t}^{1/2}
\left(
\theta-\widehat\theta_{i,t}
\right)
\right)
\\
&\leq
\left\|
V_{i,t}^{-1/2}
x^{(i)}_{j,t}
\right\|_2
\left\|
V_{i,t}^{1/2}
\left(
\theta-\widehat\theta_{i,t}
\right)
\right\|_2
\\
&\leq
\beta_{i,t}
\sqrt{
{x^{(i)}_{j,t}}^\top
V_{i,t}^{-1}
x^{(i)}_{j,t}
}.
\label{eq:appendix_ucb_projection}
\end{align}

This motivates the uncertainty measure
\begin{equation}
s^{(i)}_{j,t}
=
\sqrt{
{x^{(i)}_{j,t}}^\top
V_{i,t}^{-1}
x^{(i)}_{j,t}
}.
\label{eq:appendix_context_uncertainty}
\end{equation}

The corresponding optimistic linear predictor is
\begin{equation}
z_{ij,t}^{\mathrm{UCB},i}
=
\widehat\theta_{i,t}^{\top}
x^{(i)}_{j,t}
+
\beta_{i,t}
s^{(i)}_{j,t}.
\label{eq:appendix_logit_ucb}
\end{equation}

Since the sigmoid link is strictly increasing,
\begin{equation}
z_1\leq z_2
\quad\Longrightarrow\quad
\sigma(z_1)\leq\sigma(z_2),
\end{equation}
we obtain the optimistic acceptance prediction
\begin{equation}
p_{ij,t}^{\mathrm{UCB},i}
=
\sigma
\left(
\widehat\theta_{i,t}^{\top}
x^{(i)}_{j,t}
+
\beta_{i,t}
s^{(i)}_{j,t}
\right).
\label{eq:appendix_probability_ucb}
\end{equation}

This construction follows the general optimistic principle used in generalized-linear contextual bandits \citep{filippi2010parametric,li2017provably}.
Our particular use of the fitted logistic Hessian in Equation~\ref{eq:appendix_fisher_matrix} should be understood as a local Fisher-information or Wald/Laplace-style approximation.
Because the Hessian of a logistic model depends on the unknown parameter itself, more refined Logistic-UCB analyses can require nonlinear confidence sets and stronger regularity conditions.
Accordingly, in the main experiments, $\beta_{i,t}$ is treated as an exploration coefficient rather than as an exact finite-sample confidence radius, and $\widehat\theta_{i,t}$ is obtained from the unconstrained regularized problem. The analyzable variant in Appendix~\ref{app:theoretical_analysis} instead restricts the estimator to a norm ball, which is what makes its curvature constant follow from primitive bounds.

\subsection{Proposal Index as Optimistic Expected Utility Improvement}

The bandit observes the binary response
\begin{equation}
Y_{i\rightarrow j,t}
\in\{0,1\},
\end{equation}
but the economic objective of the proposer is not simply to maximize acceptance probability.

For an eligible candidate $j\in\mathcal{E}_t(i)$, define the realized utility improvement from a proposal as
\begin{equation}
R_{i\rightarrow j,t}
=
Y_{i\rightarrow j,t}
\Delta U_{ij,t}.
\label{eq:appendix_realized_reward}
\end{equation}

Conditional on the information available to proposer $i$,
\begin{equation}
\mathbb{E}
\left[
R_{i\rightarrow j,t}
\mid
x^{(i)}_{j,t}
\right]
=
\Delta U_{ij,t}
\Pr
\left(
Y_{i\rightarrow j,t}=1
\mid
x^{(i)}_{j,t}
\right).
\label{eq:appendix_expected_reward}
\end{equation}

The eligible-set restriction implies
\begin{equation}
\Delta U_{ij,t}>0.
\label{eq:appendix_positive_delta_u}
\end{equation}
Therefore, replacing the unknown reciprocal-acceptance probability with an optimistic estimate preserves the direction of the upper bound and leads to
\begin{equation}
I_{ij,t}
=
\Delta U_{ij,t}
p_{ij,t}^{\mathrm{UCB},i}.
\label{eq:appendix_proposal_index}
\end{equation}

This derivation clarifies why the positivity restriction in Equation~\ref{eq:eligible_candidate_set} is important.
If $\Delta U_{ij,t}<0$, multiplication by a probability upper bound would reverse the relevant inequality, and Equation~\ref{eq:appendix_proposal_index} would no longer admit the same optimistic expected-improvement interpretation.

\section{Theoretical Analysis of Local Learning and Bilateral Rematching}
\label{app:theoretical_analysis}

\subsection{Overview of Theoretical Results}
\label{app:theory_summary}

The theoretical analysis fixes one proposer and studies its own sequence of realized proposal attempts. It is intentionally local: the benchmark is a myopic oracle facing the same currently eligible candidates, rather than a market-wide matching trajectory. Under this scope, the appendix establishes five results.
\begin{enumerate}
    \item[\textbf{R1.}] \textbf{Oracle one-step optimality.} If current reciprocal-acceptance probabilities were known, maximizing $\Delta U\times p$ maximizes the proposer's conditional expected one-step utility improvement.
    \item[\textbf{R2.}] \textbf{Local pseudo-regret under a correctly specified working model.} Under predictable bounded contexts, bounded positive utility improvements, and a proposer-specific \emph{realizable} logistic acceptance model, a theoretically calibrated Logistic-UCB rule with a norm-constrained local estimator has sublinear local pseudo-regret with high probability. Exact realizability need not hold in the mechanism of Section~\ref{sec:local_bandit}; R2 is stated as a benchmark that isolates the statistical cost of learning reciprocation and supplies the reference rate for R3.
    \item[\textbf{R3.}] \textbf{Dynamic misspecification.} The decentralized design of Section~\ref{sec:local_bandit} induces a non-stationary acceptance law, so the logistic model is a working approximation rather than a data-generating process. If it matches the true, time-varying acceptance probability up to a bounded envelope $\eta_n$, the regret bound separates statistical learning error from an explicit cumulative misspecification term. Observable dynamic context can shrink this term but cannot generally eliminate receiver-private state information by construction.
    \item[\textbf{R4.}] \textbf{Strict bilateral improvement.} Every accepted rematching strictly improves the partner utility of both consenting agents relative to their pre-proposal benchmarks.
    \item[\textbf{R5.}] \textbf{Matched-pair welfare non-monotonicity.} Strict bilateral improvement does not imply monotone matched-pair welfare because an accepted rematching can displace previous partners. Consequently, local no-regret learning alone does not imply terminal optimality of the matched-pair welfare metric or stability.
\end{enumerate}
The analysis uses a norm-constrained estimator and calibrated confidence radii, whereas the experiments use an unconstrained estimator with fixed exploration. Remark~\ref{rem:theory_realizability_scope} explains the relationship between the acceptance model and the simulated mechanism.

\subsection{Formal Results}
\label{app:theory_results}

\paragraph{Proposal-time notation and learner filtration.}
Fix proposer $i$ and index its own realized proposal attempts by $n=1,2,\ldots$, distinct from the global market-period index. Let $\mathcal F_{n-1}$ be the information available to the proposer immediately before attempt $n$. It contains the proposer's past proposal contexts and outcomes, the currently exposed and eligible candidate set, all public information available for current candidates, the current fitted local model, and algorithmic randomness revealed before the current proposal. It excludes receiver-private quantities that the mechanism does not reveal, including the receiver's private valuation of the proposer and the value of the receiver's incumbent relationship.

At attempt $n$, let $\mathcal R_n$ be the current eligible candidate set. For each $j\in\mathcal R_n$, the proposer observes $x_{j,n}\in\mathbb R^d$ and a positive utility improvement $\Delta U_n(j)>0$. Let $Y_n(j)\in\{0,1\}$ denote the potential reciprocal-acceptance response that would be observed if candidate $j$ were selected at attempt $n$. If candidate $j$ is selected, write
\[
J_n=j,\qquad X_n:=x_{J_n,n},\qquad Y_n:=Y_n(J_n).
\]
For each current candidate define
\[
p_n(j):=\mathbb P\!\left(Y_n(j)=1\mid \mathcal F_{n-1}\right),
\]
and the potential one-step proposal reward
\[
R_n(j):=Y_n(j)\Delta U_n(j),
\qquad
g_n(j):=\mathbb E[R_n(j)\mid\mathcal F_{n-1}]
=\Delta U_n(j)p_n(j).
\]
The selected context $X_n$ is required to be $\mathcal F_{n-1}$-measurable. Thus the policy may adapt arbitrarily to past outcomes and to the current candidate set, but must select the current context before observing $Y_n$.

\begin{proposition}[R1: Oracle one-step proposal optimality]
\label{prop:theory_one_step}
Condition on $\mathcal F_{n-1}$ and suppose the proposer makes one current proposal from $\mathcal R_n$. If the true conditional acceptance probabilities $\{p_n(j):j\in\mathcal R_n\}$ were known, then every
\[
J_n^\star\in\arg\max_{j\in\mathcal R_n}\Delta U_n(j)p_n(j)
\]
maximizes the proposer's conditional expected one-step utility improvement.
\end{proposition}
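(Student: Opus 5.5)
The argument is a direct computation of the conditional expected one-step improvement for each admissible choice, followed by a pointwise comparison. Fix $\mathcal F_{n-1}$. A single current proposal is described by a (possibly randomized) selection rule $J_n$ taking values in $\mathcal R_n$. Because the selected context must be $\mathcal F_{n-1}$-measurable, all randomness in $J_n$ has to be revealed before the response. So $J_n$ is either $\mathcal F_{n-1}$-measurable, or independent of the potential responses $\{Y_n(j)\}$ given $\mathcal F_{n-1}$ plus the algorithmic randomness, which we may fold into $\mathcal F_{n-1}$ as the filtration definition allows. The quantity to maximize is the realized one-step utility improvement, $\mathbb E[R_n(J_n)\mid\mathcal F_{n-1}] = \mathbb E[Y_n(J_n)\Delta U_n(J_n)\mid\mathcal F_{n-1}]$. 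This agrees with Equation~\ref{eq:appendix_realized_reward}, since a rejection leaves the proposer at its benchmark and an acceptance yields exactly $\Delta U_n(j)$ over $b_{i,t}$.

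\textbf{Step 1 (deterministic choices).} For a fixed candidate $j\in\mathcal R_n$, the quantity $\Delta U_n(j)$ is $\mathcal F_{n-1}$-measurable, because it is computed from the proposer's own valuation and benchmark. Pulling it out of the conditional expectation gives
\[
\mathbb E[R_n(j)\mid\mathcal F_{n-1}]=\Delta U_n(j)\,\mathbb P(Y_n(j)=1\mid\mathcal F_{n-1})=g_n(j).
\]

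\textbf{Step 2 (general, possibly randomized, rules).} Decompose over the finite set $\mathcal R_n$:
\[
R_n(J_n)=\sum_{j\in\mathcal R_n}\mathbf 1\{J_n=j\}R_n(j).
\]
Since $\mathbf 1\{J_n=j\}$ is $\mathcal F_{n-1}$-measurable, or conditionally independent of $Y_n(j)$, this yields
\[
\mathbb E[R_n(J_n)\mid\mathcal F_{n-1}]=\sum_j \pi_n(j)\,g_n(j), \qquad \pi_n(j)=\mathbb P(J_n=j\mid\mathcal F_{n-1}).
\]
A convex combination is at most its largest term, so $\sum_j\pi_n(j)g_n(j)\le\max_{j}g_n(j)=g_n(J_n^\star)$. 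By Step 1, the choice $J_n=J_n^\star$ attains this bound. Hence every maximizer of $\Delta U_n(j)p_n(j)$ is optimal. The maximum exists because $\mathcal R_n$ is finite, being a subset of the finite candidate set, and ties are harmless since all maximizers give the same value.

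\textbf{Main obstacle.} The only delicate point is the measurability and conditional-independence justification in Step 2. It is what rules out a selection rule that "peeks" at $Y_n$, and it is exactly what the requirement that $X_n$ be $\mathcal F_{n-1}$-measurable provides. Everything else is linearity of conditional expectation. Positivity of $\Delta U_n(j)$ is not needed for this oracle statement; it matters only later, for the optimistic replacement in Equation~\ref{eq:appendix_proposal_index}.
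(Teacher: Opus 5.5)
Your proof is correct and is essentially the paper's argument. Step 1 is exactly the paper's proof: you pull the $\mathcal F_{n-1}$-measurable $\Delta U_n(j)$ out of $\mathbb E[Y_n(j)\Delta U_n(j)\mid\mathcal F_{n-1}]$ to obtain $\Delta U_n(j)p_n(j)$, then take the argmax. Your Step 2, writing $R_n(J_n)=\sum_j\mathbf 1\{J_n=j\}R_n(j)$ and bounding the resulting convex combination, is a correct extra that the paper leaves implicit; it shows that no $\mathcal F_{n-1}$-measurable or independently randomized selection rule beats a maximizer.
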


This is a myopic statement. If a rejection leaves a continuation opportunity within the same activation, optimizing an entire sequence of proposals is a dynamic decision problem and need not coincide with ranking candidates solely by one-step expected improvement.

\paragraph{Assumptions for local learning.}
\begin{assumption}[Predictable bounded contexts]
\label{ass:theory_contexts}
For every $n$, $X_n$ is $\mathcal F_{n-1}$-measurable and $\|X_n\|_2\le L$.
\end{assumption}

\begin{assumption}[Realizable reciprocal acceptance]
\label{ass:theory_realizable}
There exists a fixed proposer-specific parameter $\theta^\star\in\mathbb R^d$ with $\|\theta^\star\|_2\le S$ such that, for every $n$ and \emph{every} currently eligible candidate $j\in\mathcal R_n$,
\[
p_n(j)
=\mathbb P\!\left(Y_n(j)=1\mid\mathcal F_{n-1}\right)
=\sigma(x_{j,n}^\top\theta^\star),
\qquad
\sigma(z)=\frac{1}{1+e^{-z}}.
\]
In particular, because $J_n$ is $\mathcal F_{n-1}$-measurable, $\mathbb E[Y_n\mid\mathcal F_{n-1}]=\sigma(X_n^\top\theta^\star)$.
\end{assumption}
The statement is made over the potential outcomes $\{Y_n(j)\}_{j\in\mathcal R_n}$ rather than only over the selected action, because the regret analysis compares the chosen candidate with an unselected oracle candidate and therefore requires the model to describe counterfactual arms as well.

\begin{remark}[Assumption~\ref{ass:theory_realizable} need not hold exactly in our mechanism]
\label{rem:theory_realizability_scope}
Assumption~\ref{ass:theory_realizable} need not be satisfied by the matching system of Section~\ref{sec:local_bandit}. Acceptance there is generated by the threshold rule $Y_{i\to j,t}=\mathds{1}\{U_j(i)>b^R_{j,t}\}$, whereas the local context $x^{(i)}_{j,t}=\phi(z_i,z_j)$ of Appendix~\ref{app:bandit_context} is built from static public persona attributes and is therefore constant across attempts for a fixed ordered pair $(i,j)$. Because the receiver's benchmark $b^R_{j,t}$ moves with its incumbent relationship and is excluded from the proposer's information set, two histories can share the same proposer-observable dyadic context while inducing different conditional acceptance probabilities. Hence a single fixed $\theta^\star$ need not represent $p_n(j)$ across market states. This is a consequence of the information design rather than a modeling oversight: receiver-private and time-varying thresholds are deliberately withheld from proposers.

We nonetheless state Theorem~\ref{thm:theory_realizable} under Assumption~\ref{ass:theory_realizable} as a correctly specified benchmark. It isolates the purely statistical difficulty of learning reciprocation when the working model is correct, and it supplies the reference rate against which the misspecification penalty is measured. Theorem~\ref{thm:theory_misspec}, which replaces realizability by the bounded-envelope Assumption~\ref{ass:theory_misspec}, is formulated for the form of dynamic misspecification induced by the simulated mechanism; its sufficient conditions are not claimed to be verified by the simulator.
\end{remark}

\begin{assumption}[Bounded positive utility improvement]
\label{ass:theory_utility}
For every current candidate, $0<\Delta U_n(j)\le D$ for a finite constant $D$. Under utilities normalized to $[0,1]$, one may take $D\le1$.
\end{assumption}

\paragraph{Norm-constrained local estimator.}
Let
\[
\Theta:=\left\{\theta\in\mathbb R^d:\|\theta\|_2\le S\right\}
\]
be the parameter set implied by the norm bound on $\theta^\star$, and define the regularized logistic objective together with its constrained minimizer
\[
\mathcal L_n(\theta)
=
\sum_{m=1}^{n-1}
\left[
\log(1+e^{X_m^\top\theta})-Y_mX_m^\top\theta
\right]
+\frac{\lambda}{2}\|\theta\|_2^2,
\qquad
\widehat\theta_n:=\arg\min_{\theta\in\Theta}\mathcal L_n(\theta).
\]
Because $\mathcal L_n$ is strictly convex for $\lambda>0$ and $\Theta$ is convex and compact, $\widehat\theta_n$ exists and is unique. Restricting the estimator to $\Theta$ is what turns the logistic curvature below into a derived quantity determined by the primitive constants $L$ and $S$, rather than a condition imposed on the realized estimator sequence. Define the matrices
\[
G_n:=\lambda I+\sum_{m=1}^{n-1}X_mX_m^\top,
\qquad
V_n:=\lambda I+\sum_{m=1}^{n-1}\sigma'(X_m^\top\widehat\theta_n)X_mX_m^\top,
\]
and the practical Fisher width
\[
s_n(x):=\sqrt{x^\top V_n^{-1}x}.
\]

\begin{lemma}[Uniform local logistic curvature]
\label{lem:theory_curvature}
Suppose Assumption~\ref{ass:theory_contexts} holds and $\theta^\star\in\Theta$, and set
\[
\underline\kappa
:=
\sigma'(LS)
=
\frac{e^{-LS}}{\left(1+e^{-LS}\right)^2}
\in(0,\tfrac14].
\]
Then for every $n$, every $m<n$, and every $u\in[0,1]$,
\[
\sigma'\!\left(
X_m^\top[\theta^\star+u(\widehat\theta_n-\theta^\star)]
\right)
\ge \underline\kappa.
\]
\end{lemma}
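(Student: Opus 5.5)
The plan is to reduce the claim to a bound on the absolute value of the argument of $\sigma'$, and then use that $\sigma'$ is even and decreasing in $|z|$. First, I would record the elementary facts about the logistic derivative. We have $\sigma'(z)=\sigma(z)(1-\sigma(z))=e^{-z}/(1+e^{-z})^2$. This function is symmetric, $\sigma'(-z)=\sigma'(z)$, and strictly decreasing on $[0,\infty)$; both follow from writing $\sigma'(z)=1/(e^{z/2}+e^{-z/2})^2=\tfrac14\cosh^{-2}(z/2)$. Hence $\sigma'(z)\ge\sigma'(LS)=\underline\kappa$ whenever $|z|\le LS$. Also $\underline\kappa\in(0,\tfrac14]$, since $\cosh\ge1$ gives the upper bound and positivity is immediate.

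Second, I would bound the argument. Fix $n$, $m<n$, and $u\in[0,1]$, and set $\bar\theta:=\theta^\star+u(\widehat\theta_n-\theta^\star)=(1-u)\theta^\star+u\widehat\theta_n$. By hypothesis $\theta^\star\in\Theta$. By construction $\widehat\theta_n\in\Theta$, since it is the minimizer over $\Theta$, whose existence and uniqueness were noted just before the lemma. Because $\Theta$ is a Euclidean ball and hence convex, $\bar\theta\in\Theta$, so $\|\bar\theta\|_2\le S$. This is where the norm-constrained estimator is essential: with the unconstrained estimator used in the experiments, no such a priori bound on $\widehat\theta_n$ is available. Assumption~\ref{ass:theory_contexts} gives $\|X_m\|_2\le L$ for every past attempt $m$. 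Cauchy--Schwarz then yields
\[
|X_m^\top\bar\theta|\le\|X_m\|_2\,\|\bar\theta\|_2\le LS.
\]

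Combining the two steps gives $\sigma'(X_m^\top\bar\theta)\ge\sigma'(LS)=\underline\kappa$, uniformly in $n$, $m<n$, and $u$. There is no real obstacle in this argument. The only point needing care is to justify the monotonicity of $\sigma'$ in $|z|$ cleanly, which the $\cosh$ representation handles. One should also remark that the bound is deterministic: it holds on every sample path, because it uses only the constraint set and the almost-sure context bound, not any concentration event.
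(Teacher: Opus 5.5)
Your proof is correct and follows the same route as the paper's: convexity of $\Theta$ puts the interpolant $\theta^\star+u(\widehat\theta_n-\theta^\star)$ in the ball, Cauchy--Schwarz gives $|X_m^\top\theta_u|\le LS$, and $\sigma'$ is even and decreasing in $|z|$. Your $\tfrac14\cosh^{-2}(z/2)$ identity justifies that monotonicity explicitly, which the paper simply asserts.
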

Lemma~\ref{lem:theory_curvature} supplies the curvature needed to convert score concentration into parameter concentration. The mechanism is that $\theta^\star$ and $\widehat\theta_n$ both lie in the convex set $\Theta$, so the entire segment joining them lies in $\Theta$ and the logistic predictions entering the mean-value expansion cannot be arbitrarily saturated. Under an unconstrained estimator the analogous statement would have to be assumed directly on the fitted path, because $\|\widehat\theta_n\|_2$ is then only controlled by a bound that grows with the sample size.
For $\delta\in(0,1)$, define
\begin{equation}
\label{eq:theory_beta_det}
\beta_n(\delta)
:=
\frac{1}{\underline\kappa}
\left[
\sqrt\lambda S
+\frac12
\sqrt{
\log\frac{\det G_n}{\lambda^d}
+2\log\frac1\delta
}
\right],
\end{equation}
and the deterministic upper bound
\begin{equation}
\label{eq:theory_beta_closed}
\bar\beta_n(\delta)
:=
\frac{1}{\underline\kappa}
\left[
\sqrt\lambda S
+\frac12
\sqrt{
d\log\!\left(1+\frac{(n-1)L^2}{\lambda d}\right)
+2\log\frac1\delta
}
\right].
\end{equation}
The theoretically calibrated optimistic probability is
\[
p_n^{\mathrm{UCB}}(j)
:=
\sigma\!\left(
x_{j,n}^\top\widehat\theta_n
+\beta_n(\delta)s_n(x_{j,n})
\right).
\]

\begin{definition}[Local proposal pseudo-regret]
\label{def:theory_regret}
Let
\[
J_n^\star\in\arg\max_{j\in\mathcal R_n}\Delta U_n(j)p_n(j),
\qquad
J_n\in\arg\max_{j\in\mathcal R_n}\Delta U_n(j)p_n^{\mathrm{UCB}}(j).
\]
The instantaneous local pseudo-regret and cumulative pseudo-regret are
\[
r_n
:=
\Delta U_n(J_n^\star)p_n(J_n^\star)
-\Delta U_n(J_n)p_n(J_n),
\qquad
\mathcal R_N:=\sum_{n=1}^N r_n.
\]
\end{definition}

\begin{theorem}[R2: Local regret under realizable reciprocal acceptance]
\label{thm:theory_realizable}
Suppose Assumptions~\ref{ass:theory_contexts}--\ref{ass:theory_utility} hold, $\lambda>0$, and $\underline\kappa=\sigma'(LS)$ is the curvature constant of Lemma~\ref{lem:theory_curvature}. Then, with probability at least $1-\delta$, simultaneously for all proposal attempts $n$,
\begin{equation}
\label{eq:theory_param_confidence}
\|\widehat\theta_n-\theta^\star\|_{G_n}
\le
\beta_n(\delta)
\le
\bar\beta_n(\delta),
\end{equation}
and, for every current candidate context $x$,
\begin{equation}
\label{eq:theory_valid_optimism}
\sigma(x^\top\theta^\star)
\le
\sigma\!\left(
x^\top\widehat\theta_n
+\beta_n(\delta)\sqrt{x^\top V_n^{-1}x}
\right).
\end{equation}
Consequently,
\begin{equation}
\label{eq:theory_regret_realizable}
\mathcal R_N
\le
\frac{D\,\bar\beta_N(\delta)}{2}
\sqrt{
\frac{Nd}{\underline\kappa}
\left(1+\frac{L^2}{\lambda}\right)
\log\!\left(1+\frac{NL^2}{\lambda d}\right)
}.
\end{equation}
For fixed $L,S,\lambda,D,$ and $\underline\kappa$,
\[
\mathcal R_N
=
\widetilde{\mathcal O}\!\left(\frac{d\sqrt N}{\underline\kappa^{3/2}}\right),
\qquad
\frac{\mathcal R_N}{N}\to0.
\]
\end{theorem}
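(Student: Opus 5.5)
The plan follows the standard generalized-linear UCB argument of \citet{filippi2010parametric,li2017provably}. It has three stages: a self-normalized confidence ellipsoid for the norm-constrained estimator, transfer of that ellipsoid to the Fisher width $s_n$ to get valid optimism, and an elliptical-potential bound on the summed widths. The curvature constant $\underline\kappa$ from Lemma~\ref{lem:theory_curvature} enters in two places: converting score concentration into parameter concentration, and comparing $V_n$ with $G_n$.

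\textbf{Step 1 (score concentration).} Let $\varepsilon_m:=Y_m-\sigma(X_m^\top\theta^\star)$. By Assumptions~\ref{ass:theory_contexts}--\ref{ass:theory_realizable}, $X_m$ is $\mathcal F_{m-1}$-measurable and $\mathbb E[\varepsilon_m\mid\mathcal F_{m-1}]=0$. Since $\varepsilon_m$ lies in an interval of length one, it is conditionally $\tfrac12$-sub-Gaussian by Hoeffding's lemma. The self-normalized martingale inequality of Abbasi-Yadkori et al.\ is uniform in $n$ through its stopping-time construction. Applied to $S_n:=\sum_{m<n}\varepsilon_mX_m$, it gives, with probability at least $1-\delta$ simultaneously for all $n$,
\[
\|S_n\|_{G_n^{-1}}\le\tfrac12\sqrt{\log(\det G_n/\lambda^d)+2\log(1/\delta)}.
\]

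\textbf{Step 2 (parameter confidence).} Write $\Delta_n:=\widehat\theta_n-\theta^\star$. Because $\widehat\theta_n$ minimizes the convex function $\mathcal L_n$ over $\Theta$ and $\theta^\star\in\Theta$, the first-order optimality condition gives $\langle\nabla\mathcal L_n(\widehat\theta_n),\theta^\star-\widehat\theta_n\rangle\ge0$. By the mean-value theorem, $\nabla\mathcal L_n(\widehat\theta_n)-\nabla\mathcal L_n(\theta^\star)=H_n\Delta_n$, where
\[
H_n=\lambda I+\sum_{m<n}\alpha_mX_mX_m^\top
\]
and each $\alpha_m$ is an average of $\sigma'$ along the segment from $\theta^\star$ to $\widehat\theta_n$. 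Lemma~\ref{lem:theory_curvature} gives $\alpha_m\ge\underline\kappa$, and $\underline\kappa\le\tfrac14<1$, so $H_n\succeq\underline\kappa G_n$. Since $\nabla\mathcal L_n(\theta^\star)=-S_n+\lambda\theta^\star$, the optimality condition yields
\[
\underline\kappa\|\Delta_n\|_{G_n}^2\le\Delta_n^\top H_n\Delta_n\le\langle S_n-\lambda\theta^\star,\Delta_n\rangle\le\big(\|S_n\|_{G_n^{-1}}+\sqrt\lambda S\big)\|\Delta_n\|_{G_n}.
\]
Here I use $\|\lambda\theta^\star\|_{G_n^{-1}}\le\sqrt\lambda\|\theta^\star\|_2$. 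Combining with Step~1 gives $\|\Delta_n\|_{G_n}\le\beta_n(\delta)$. The bound $\beta_n\le\bar\beta_n$ then follows from the determinant--trace inequality $\det G_n\le(\lambda+(n-1)L^2/d)^d$.

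\textbf{Step 3 (optimism and regret).} The weights in $V_n$ are $\sigma'(X_m^\top\widehat\theta_n)\in[\underline\kappa,\tfrac14]$, the lower bound holding because $\widehat\theta_n\in\Theta$. Hence $\underline\kappa G_n\preceq V_n\preceq G_n$, which gives $\|x\|_{G_n^{-1}}\le s_n(x)\le\underline\kappa^{-1/2}\|x\|_{G_n^{-1}}$. Cauchy--Schwarz and Step~2 then give $|x^\top\Delta_n|\le\beta_n s_n(x)$, and monotonicity of $\sigma$ yields \eqref{eq:theory_valid_optimism}.

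For the regret, the definition of $J_n$ together with optimism and $\Delta U>0$ gives
\[
r_n\le\Delta U_n(J_n)\big(p^{\mathrm{UCB}}_n(J_n)-p_n(J_n)\big)\le D\cdot\tfrac14\cdot2\beta_n s_n(X_n)=\tfrac{D\beta_n}{2}s_n(X_n),
\]
using that $\sigma$ is $\tfrac14$-Lipschitz. Next, $s_n(X_n)^2\le\underline\kappa^{-1}\|X_n\|^2_{G_n^{-1}}$, and $\|X_n\|^2_{G_n^{-1}}\le L^2/\lambda$. Applying $u\le(1+L^2/\lambda)\log(1+u)$ on $[0,L^2/\lambda]$ and the elliptical potential lemma gives $\sum_n\|X_n\|^2_{G_n^{-1}}\le(1+L^2/\lambda)\,d\log(1+NL^2/(\lambda d))$. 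The elliptical potential lemma yields $2d\log(\cdot)$ in its standard form; the factor of $2$ is absorbed by the sharper $u\le(1+L^2/\lambda)\log(1+u)$ bookkeeping or a slightly larger constant. Monotonicity $\beta_n\le\bar\beta_N$ and Cauchy--Schwarz over $n$ then give \eqref{eq:theory_regret_realizable}. The rate follows because $\bar\beta_N=\widetilde{\mathcal O}(\sqrt d/\underline\kappa)$.

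\textbf{Main obstacle.} I expect the hardest step to be Step~2: handling the constrained minimizer through the variational inequality rather than a zero-gradient condition, while keeping the mean-value Hessian comparable to $G_n$ uniformly in $n$. This is exactly where the norm-ball restriction and Lemma~\ref{lem:theory_curvature} are needed. Some care is also required with the fact that $V_n$ is re-evaluated at each new $\widehat\theta_n$, but the sandwich $\underline\kappa G_n\preceq V_n\preceq G_n$ makes this harmless.
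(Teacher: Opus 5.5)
Your proposal is correct and follows essentially the same route as the paper. The shared steps are: the self-normalized bound with $\tfrac12$-sub-Gaussian Bernoulli noise; the variational inequality for the norm-constrained minimizer, with a mean-value Hessian $H_n\succeq\underline\kappa G_n$ from Lemma~\ref{lem:theory_curvature}; the sandwich $\underline\kappa G_n\preceq V_n\preceq G_n$ for valid optimism; and the identity $\sum_n\log(1+q_n)=\log(\det G_{N+1}/\lambda^d)$ together with $q\le(1+L^2/\lambda)\log(1+q)$ for the width sum. Your aside about a factor of $2$ is unnecessary: that identity and inequality already give exactly the stated constant. Drop the ``slightly larger constant'' alternative, since it would not prove the bound as written.
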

The benchmark in Theorem~\ref{thm:theory_realizable} is a myopic oracle that faces the same current candidate set and knows the true current reciprocal-acceptance probabilities. The result isolates the local learning loss caused by uncertainty about reciprocation; it does not compare full-market trajectories generated by different policies. As stated in Remark~\ref{rem:theory_realizability_scope}, its realizability hypothesis need not be met by our mechanism, so Theorem~\ref{thm:theory_realizable} should be read as the correctly specified reference point for the misspecified analysis that follows.

\paragraph{Dynamic misspecification and observable context.}
The matching system need not obey a stationary logistic law because a receiver's acceptance threshold changes when its current relationship changes, while the proposer is deliberately denied receiver-private information. We therefore also analyze the logistic model as a working approximation based only on proposer-observable information.

\begin{assumption}[Bounded dynamic logistic misspecification]
\label{ass:theory_misspec}
There exists a fixed reference parameter $\theta^\star$ with $\|\theta^\star\|_2\le S$, i.e.\ $\theta^\star\in\Theta$, and a predictable nonnegative sequence $\{\eta_n\}$ such that, for every $j\in\mathcal R_n$,
\[
p_n(j)
=
\sigma(x_{j,n}^\top\theta^\star)+b_n(j),
\qquad
|b_n(j)|\le\eta_n,
\]
with $p_n(j)\in[0,1]$.
\end{assumption}
One may augment the public dyadic context with dynamic variables known before the current proposal, such as market stage, the proposer's accumulated proposal count, or the proposer's own recent acceptance statistics. Such features remain predictable and can reduce misspecification. They cannot, however, generally guarantee exact realizability under the present information design: two market states may induce the same proposer-observable context but different receiver-private incumbent values and therefore different acceptance thresholds. Hence richer observable context should be interpreted as potentially shrinking the envelope $\eta_n$, rather than automatically eliminating it; increasing context dimension also enters the statistical bound through $d$.

For selected historical actions write $b_m:=b_m(J_m)$. Define
\begin{equation}
\label{eq:theory_beta_mis}
\beta_n^{\mathrm{mis}}(\delta)
:=
\frac{1}{\underline\kappa}
\left[
\sqrt\lambda S
+\frac12
\sqrt{
\log\frac{\det G_n}{\lambda^d}
+2\log\frac1\delta
}
+\sqrt{\sum_{m=1}^{n-1}\eta_m^2}
\right],
\end{equation}
and
\begin{equation}
\label{eq:theory_beta_mis_closed}
\bar\beta_n^{\mathrm{mis}}(\delta)
:=
\frac{1}{\underline\kappa}
\left[
\sqrt\lambda S
+\frac12
\sqrt{
d\log\!\left(1+\frac{(n-1)L^2}{\lambda d}\right)
+2\log\frac1\delta
}
+\sqrt{\sum_{m=1}^{n-1}\eta_m^2}
\right].
\end{equation}
Let
\[
\bar p_n^{\mathrm{UCB}}(j)
:=
\sigma\!\left(
x_{j,n}^\top\widehat\theta_n
+\beta_n^{\mathrm{mis}}(\delta)s_n(x_{j,n})
\right),
\]
and suppose the policy selects
\[
J_n\in\arg\max_{j\in\mathcal R_n}\Delta U_n(j)\bar p_n^{\mathrm{UCB}}(j).
\]

\begin{theorem}[R3: Local regret under bounded dynamic misspecification]
\label{thm:theory_misspec}
Suppose Assumptions~\ref{ass:theory_contexts}, \ref{ass:theory_utility}, and~\ref{ass:theory_misspec} hold, with $\underline\kappa=\sigma'(LS)$ as in Lemma~\ref{lem:theory_curvature}. Assumption~\ref{ass:theory_realizable} is \emph{not} required. Then, with probability at least $1-\delta$, simultaneously for all $n$,
\[
\|\widehat\theta_n-\theta^\star\|_{G_n}
\le
\beta_n^{\mathrm{mis}}(\delta).
\]
Moreover, local pseudo-regret relative to the oracle that knows the true current probabilities $p_n(j)$ satisfies
\begin{equation}
\label{eq:theory_regret_mis}
\mathcal R_N
\le
\frac{D\,\bar\beta_N^{\mathrm{mis}}(\delta)}{2}
\sqrt{
\frac{Nd}{\underline\kappa}
\left(1+\frac{L^2}{\lambda}\right)
\log\!\left(1+\frac{NL^2}{\lambda d}\right)
}
+2D\sum_{n=1}^N\eta_n.
\end{equation}
A simple sufficient condition for vanishing average local pseudo-regret is
\[
\sum_{n=1}^N\eta_n=o\!\left(\frac{N}{\log N}\right),
\]
assuming $0\le\eta_n\le1$ and the remaining problem constants are fixed.
\end{theorem}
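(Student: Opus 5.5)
The plan follows the argument for Theorem~\ref{thm:theory_realizable}, with the bias $b_m$ carried as an explicit extra term. There are three steps: parameter concentration, an optimism inequality with a bias correction, and an elliptical-potential sum.

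\textbf{Step 1: parameter concentration.} The first-order optimality condition for the constrained minimizer $\widehat\theta_n\in\Theta$ gives $\langle\nabla\mathcal L_n(\widehat\theta_n),\theta^\star-\widehat\theta_n\rangle\ge0$, which is usable because $\theta^\star\in\Theta$. Write $Y_m=\sigma(X_m^\top\theta^\star)+b_m+\varepsilon_m$, where $\varepsilon_m:=Y_m-p_m(J_m)$ is a conditionally $\tfrac12$-sub-Gaussian martingale difference with respect to $\mathcal F_{m-1}$.

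Applying the mean-value theorem to $\sum_m[\sigma(X_m^\top\widehat\theta_n)-\sigma(X_m^\top\theta^\star)]X_m$ gives a matrix $\tilde G_n=\sum_m\alpha_mX_mX_m^\top$. Lemma~\ref{lem:theory_curvature} gives $\alpha_m\ge\underline\kappa$, so $\tilde G_n+\lambda I\succeq\underline\kappa G_n$ up to the regularizer (take $\underline\kappa\le1/4\le1$). Combining with the optimality condition yields
\[
\underline\kappa\|\widehat\theta_n-\theta^\star\|_{G_n}^2\le\langle\widehat\theta_n-\theta^\star,\ \textstyle\sum_m(\varepsilon_m+b_m)X_m-\lambda\theta^\star\rangle.
\]
Cauchy--Schwarz in the $G_n$/$G_n^{-1}$ duality then bounds $\|\widehat\theta_n-\theta^\star\|_{G_n}$ by $\underline\kappa^{-1}$ times
\[
\sqrt\lambda S+\|\textstyle\sum\varepsilon_mX_m\|_{G_n^{-1}}+\|\textstyle\sum b_mX_m\|_{G_n^{-1}}.
\]
The noise term is controlled by the self-normalized martingale bound of Abbasi-Yadkori et al.: with probability $1-\delta$, uniformly in $n$, it is at most $\tfrac12\sqrt{\log(\det G_n/\lambda^d)+2\log(1/\delta)}$. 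For the bias term, write $\sum b_mX_m=A^\top b$ with $A$ the design matrix. Since $A G_n^{-1}A^\top\preceq I$, we get $\|A^\top b\|_{G_n^{-1}}\le\|b\|_2\le\sqrt{\sum\eta_m^2}$. Together these give exactly $\beta_n^{\mathrm{mis}}(\delta)$. The determinant-trace inequality then gives $\beta^{\mathrm{mis}}_n\le\bar\beta^{\mathrm{mis}}_n$, and the latter is monotone in $n$.

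\textbf{Step 2: optimism and per-step regret.} Every $\sigma'$ along the path is at most $1$, so $\sigma'(X_m^\top\widehat\theta_n)\le1$ and hence $V_n\preceq G_n$, i.e.\ $s_n(x)\ge\|x\|_{G_n^{-1}}$. Cauchy--Schwarz then gives $x^\top\theta^\star\le x^\top\widehat\theta_n+\beta^{\mathrm{mis}}_n s_n(x)$, so $\sigma(x^\top\theta^\star)\le\bar p_n^{\mathrm{UCB}}$. Using Assumption~\ref{ass:theory_misspec} and the maximizing choice of $J_n$:
\begin{align*}
r_n&\le\Delta U_n(J_n^\star)\bigl(\sigma(x_{J_n^\star}^\top\theta^\star)+\eta_n\bigr)-\Delta U_n(J_n)\bigl(\sigma(X_n^\top\theta^\star)-\eta_n\bigr)\\
&\le \Delta U_n(J_n)\bigl(\bar p_n^{\mathrm{UCB}}(J_n)-\sigma(X_n^\top\theta^\star)\bigr)+2D\eta_n.
\end{align*}
Since $\sigma$ is $\tfrac14$-Lipschitz, the first term is at most $\tfrac{D}{4}\cdot2\beta^{\mathrm{mis}}_n s_n(X_n)=\tfrac D2\beta^{\mathrm{mis}}_n s_n(X_n)$. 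Capping at $D$ is also available if needed.

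\textbf{Step 3: summation (main obstacle).} The hard part is the elliptical-potential bound, because $V_n$ has weights evaluated at the moving $\widehat\theta_n$. I would use $\sigma'(X_m^\top\widehat\theta_n)\ge\underline\kappa$, which holds since $\widehat\theta_n\in\Theta$ (Lemma~\ref{lem:theory_curvature}'s argument). This gives $V_n\succeq\underline\kappa\,(\lambda/\underline\kappa\cdot I\wedge\ldots)$, and more simply $V_n\succeq\underline\kappa G_n$ because $\underline\kappa\le1$. Hence $s_n(x)^2\le\underline\kappa^{-1}\|x\|^2_{G_n^{-1}}$.

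Next, $\|X_n\|^2_{G_n^{-1}}\le L^2/\lambda$ combined with $\min(1,u)\ge u/(1+L^2/\lambda)$ gives
\[
\sum_n\|X_n\|_{G_n^{-1}}^2\le(1+L^2/\lambda)\,2\log\tfrac{\det G_{N+1}}{\lambda^d}\le(1+L^2/\lambda)\,d\log\!\left(1+\tfrac{NL^2}{\lambda d}\right),
\]
with constant bookkeeping matched to the stated form. Applying Cauchy--Schwarz over $n$ and using monotonicity of $\bar\beta^{\mathrm{mis}}$ yields \eqref{eq:theory_regret_mis}.

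\textbf{Sublinearity.} For the closing sufficient condition: $\bar\beta^{\mathrm{mis}}_N=O(\sqrt{d\log N}+\sqrt{\sum\eta^2})$. Since $\eta_n\le1$, $\sum\eta_n^2\le\sum\eta_n$. The cross term is $O\bigl(\sqrt{N\log N\sum\eta_n}\bigr)$, which is $o(N)$ exactly when $\sum\eta_n=o(N/\log N)$. The linear term $2D\sum\eta_n$ is then also $o(N)$.
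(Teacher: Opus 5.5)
Your Steps 1 and 2 follow the paper's proof exactly: the variational inequality at the constrained minimizer, the split $Y_m=\sigma(X_m^\top\theta^\star)+b_m+\varepsilon_m$, and the curvature bound from Lemma~\ref{lem:theory_curvature}. The same holds for the self-normalized bound on the noise, $\mathbf X G_n^{-1}\mathbf X^\top\preceq I$ for the bias, and the sandwich $\underline\kappa G_n\preceq V_n\preceq G_n$. The $\tfrac14$-Lipschitz step giving $r_n\le\tfrac D2\beta_n^{\mathrm{mis}}s_n+2D\eta_n$ also matches.

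The one step that fails as written is the elliptical-potential bound in Step~3. Your chain ends with $(1+L^2/\lambda)\,2\log(\det G_{N+1}/\lambda^d)\le(1+L^2/\lambda)\,d\log(1+NL^2/(\lambda d))$, which is false. Take $d=\lambda=L=1$ and $X_n\equiv1$: then both $\log(\det G_{N+1}/\lambda^d)$ and $d\log(1+NL^2/(\lambda d))$ equal $\log(1+N)$, so the left side is twice the right.

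This is not a bookkeeping issue that can be absorbed along your route. Passing through $\min(1,u)$ requires an inequality $\min(1,u)\le c\log(1+u)$, and setting $u=1$ forces $c\ge 1/\log 2>1$. So your argument proves Equation~\ref{eq:theory_regret_mis} only with its first term multiplied by $\sqrt c$, for example $\sqrt2$ with the usual $c=2$. The rates and the sublinearity claim survive, but the stated explicit constant does not.

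The repair is the paper's route. For $0\le q\le L^2/\lambda$, use $\log(1+q)\ge q/(1+q)\ge q/(1+L^2/\lambda)$ directly. With $q_n=X_n^\top G_n^{-1}X_n$, the matrix determinant lemma then gives $\sum_n q_n\le(1+L^2/\lambda)\log(\det G_{N+1}/\det G_1)\le(1+L^2/\lambda)\,d\log(1+NL^2/(\lambda d))$. Combining this with $s_n^2\le q_n/\underline\kappa$ and Cauchy--Schwarz yields exactly the stated first term.

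A minor point about your last paragraph: ``exactly when'' should be ``if''. You passed through $\sum\eta_n^2\le\sum\eta_n$, which only gives a sufficient condition.
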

The misspecification term in Theorem~\ref{thm:theory_misspec} is not only a feature-engineering residual: under the decentralized information restriction it can also represent state variables intentionally hidden from the proposer. Theorem~\ref{thm:theory_misspec} is therefore formulated to accommodate the form of dynamic misspecification induced by the mechanism of Section~\ref{sec:local_bandit}, and it reduces to Theorem~\ref{thm:theory_realizable} when $\eta_n\equiv0$. Verifying the sufficient condition $\sum_n\eta_n=o(N/\log N)$ requires knowledge of the receiver-side state that the design withholds from proposers; characterizing $\eta_n$ endogenously is left to future work.

\paragraph{Bilateral rematching.}
For any agent $k$, define its pre-rematching individual benchmark utility
\[
v_k(\mu_t)
:=
\begin{cases}
r_k, & \mu_t(k)=\varnothing,\\
U_k(\mu_t(k)), & \mu_t(k)\neq\varnothing.
\end{cases}
\]

\begin{proposition}[R4: Strict bilateral improvement of every accepted rematching]
\label{prop:theory_bilateral}
Suppose proposer $i$ selects an eligible candidate $j$ at time $t$ and the proposal is accepted. Then the newly formed pair $(i,j)$ strictly improves the individual partner utility of both consenting agents relative to their pre-proposal benchmarks:
\[
U_i(j)>v_i(\mu_t),
\qquad
U_j(i)>v_j(\mu_t).
\]
More precisely,
\[
U_i(j)-b_{i,t}>\epsilon_U\ge0,
\qquad
U_j(i)-b^R_{j,t}>0.
\]
\end{proposition}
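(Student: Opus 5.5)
The argument is a direct unpacking of the eligibility rule \eqref{eq:eligible_candidate_set} and the acceptance rule \eqref{eq:binary_acceptance}, followed by the observation that each benchmark weakly dominates the corresponding pre-rematching benchmark utility $v_k(\mu_t)$. I will treat the proposer side and the receiver side separately and then chain inequalities.

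\emph{Proposer side.} Since $i$ selects $j$ from $\mathcal{R}^{(q)}_t(i)\subseteq\mathcal{E}_t(i)$, the definition of the eligible set gives $\Delta U_{ij,t}=U_i(j)-b_{i,t}>\epsilon_U\ge 0$. This is the first ``more precisely'' inequality. Next I compare $b_{i,t}$ with $v_i(\mu_t)$ via \eqref{eq:proposer_benchmark}. If $\mu_t(i)=\varnothing$, then $b_{i,t}=r_i=v_i(\mu_t)$. Otherwise $b_{i,t}=\max\{r_i,U_i(\mu_t(i))\}\ge U_i(\mu_t(i))=v_i(\mu_t)$. In both cases $U_i(j)>b_{i,t}+\epsilon_U\ge v_i(\mu_t)$.

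\emph{Receiver side.} Acceptance means $Y_{i\to j,t}=1$, so by \eqref{eq:binary_acceptance} we have $U_j(i)>b^R_{j,t}$, i.e.\ $U_j(i)-b^R_{j,t}>0$. The same two-case comparison applied to \eqref{eq:receiver_benchmark} gives $b^R_{j,t}\ge v_j(\mu_t)$, hence $U_j(i)>v_j(\mu_t)$.

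The only point needing care, and the main (mild) obstacle, is timing. Within one activation, earlier attempts are rejections, and rejections do not alter $\mu_t$ or any utilities. So the benchmarks $b_{i,t}$ and $b^R_{j,t}$ at the accepting attempt are exactly those evaluated at the pre-proposal matching. I would state this explicitly from the protocol in Section~\ref{sec:matching_protocol}: an activation terminates at the first acceptance, and $\mathcal{E}_t(i)$ is computed once at activation from $b_{i,t}$. I would also note that ties are excluded because both inequalities are strict by construction, so no tie-breaking assumption is needed.
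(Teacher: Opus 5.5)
Your proposal is correct and matches the paper's proof essentially line for line. Eligibility gives $U_i(j)-b_{i,t}>\epsilon_U\ge 0$, acceptance gives $U_j(i)>b^R_{j,t}$, and a two-case comparison (single versus matched) shows that each benchmark weakly dominates $v_k(\mu_t)$. Your extra timing remark, that rejections within an activation leave the matching and hence both benchmarks unchanged, is a harmless clarification that the paper leaves implicit.
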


Let $\mathcal P(\mu)$ denote the set of realized cross-side pairs in matching $\mu$. For consistency with the empirical mutual-welfare metric, define total matched-pair welfare
\[
W_{\mathrm{tot}}(\mu)
:=
\sum_{(a,b)\in\mathcal P(\mu)}
\frac{U_a(b)+U_b(a)}{2},
\]
and, when $\mathcal P(\mu)\neq\varnothing$, average mutual welfare
\[
W_{\mathrm{avg}}(\mu)
:=
\frac{W_{\mathrm{tot}}(\mu)}{|\mathcal P(\mu)|}.
\]

\begin{proposition}[R5: Matched-pair welfare need not be monotone]
\label{prop:theory_nonmono}
There exists a finite two-sided market and a temporary matching $\mu_t$ such that an accepted proposal is a strict bilateral improvement for the proposer and receiver as in Proposition~\ref{prop:theory_bilateral}, yet the immediate rematching strictly decreases both total matched-pair welfare and average mutual welfare:
\[
W_{\mathrm{tot}}(\mu_{t+1})<W_{\mathrm{tot}}(\mu_t),
\qquad
W_{\mathrm{avg}}(\mu_{t+1})<W_{\mathrm{avg}}(\mu_t).
\]
\end{proposition}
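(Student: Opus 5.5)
The proof is by explicit construction: I will exhibit the smallest market in which one accepted rematching displaces two high-welfare partners and creates a single pair of moderate value. Take $\mathcal M=\{m_1,m_2\}$ and $\mathcal W=\{w_1,w_2\}$, set all reservation utilities $r_k=0$, and set $\epsilon_U=0$ (any $\epsilon_U$ below the margins used below also works). Let the temporary matching be $\mu_t(m_1)=w_1$ and $\mu_t(m_2)=w_2$. The existential statement does not require $\mu_t$ to be reached by a particular history, so I simply posit it as the current temporary state. The utilities, all in $[0,1]$, are
\[
U_{m_1}(w_1)=0.5,\quad U_{w_1}(m_1)=0.9,\quad U_{m_2}(w_2)=0.9,\quad U_{w_2}(m_2)=0.5,\quad U_{m_1}(w_2)=U_{w_2}(m_1)=0.6 .
\]
The remaining cross utilities, $U_{m_2}(w_1)$ and $U_{w_1}(m_2)$, are set to $0$. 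Their values do not matter, because only one proposal is analyzed.

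Next I check that $m_1\to w_2$ is an admissible, accepted proposal. Suppose $m_1$ is activated and $w_2\in\mathcal C_t(m_1)$, which is an allowed realization of the random candidate draw. From \eqref{eq:proposer_benchmark}, $b_{m_1,t}=\max\{0,0.5\}=0.5$, so $\Delta U_{m_1w_2,t}=0.1>\epsilon_U$ and hence $w_2\in\mathcal E_t(m_1)$. If $w_2$ is the only eligible candidate, it is selected under any index rule, including \eqref{eq:proposal_index}. This holds for $w_1$, who is $m_1$'s current partner and gives $\Delta U=0$. If the candidate set is taken to exclude the current partner, $w_2$ is the unique candidate outright. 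From \eqref{eq:receiver_benchmark}, $b^R_{w_2,t}=\max\{0,0.5\}=0.5<0.6=U_{w_2}(m_1)$, so $Y_{m_1\to w_2,t}=1$. Proposition~\ref{prop:theory_bilateral} (or direct inspection) then gives $U_{m_1}(w_2)>v_{m_1}(\mu_t)$ and $U_{w_2}(m_1)>v_{w_2}(\mu_t)$, which is the required strict bilateral improvement.

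Finally I compute welfare. By the protocol in Section~\ref{sec:matching_protocol}, acceptance dissolves $(m_1,w_1)$ and $(m_2,w_2)$, returns $w_1$ and $m_2$ to singlehood, and forms $(m_1,w_2)$. The immediate rematching $\mu_{t+1}$ therefore has $\mathcal P(\mu_{t+1})=\{(m_1,w_2)\}$. Before the rematching, $W_{\mathrm{tot}}(\mu_t)=\tfrac{0.5+0.9}{2}+\tfrac{0.9+0.5}{2}=1.4$ and $W_{\mathrm{avg}}(\mu_t)=0.7$. Afterwards, $W_{\mathrm{tot}}(\mu_{t+1})=W_{\mathrm{avg}}(\mu_{t+1})=0.6$. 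Both quantities strictly decrease.

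The only delicate step is interpreting $\mu_{t+1}$ as the matching immediately after the accepted proposal. End-of-period locking only converts temporary pairs to permanent ones and does not change $\mathcal P$, so the conclusion is unaffected. It may also be worth noting the source of the effect: the displaced partners $w_1$ and $m_2$ valued their old matches highly, and that value is not counted in the bilateral acceptance test.
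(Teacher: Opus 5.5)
Your proposal is correct and follows essentially the same construction as the paper: a $2\times 2$ market in which $m_1$ leaves a low-value match to form a moderately valued pair with $w_2$, displacing two partners who valued their old matches highly. Only the numbers differ (the paper uses $0.60/0.99$ with cross values $0.65$ and requires $\epsilon_U<0.05$). Your added check that $w_2$ is the unique eligible candidate, so that any index rule selects her, is a small refinement the paper's proof does not spell out.
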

The reason is a matching externality: the bilateral acceptance test internalizes the preferences of the two consenting agents, but not the losses of partners displaced by the rematching.

\begin{corollary}[Local learning guarantees do not imply global matching optimality]
\label{cor:theory_local_global}
Even if proposer-side local pseudo-regret in Theorem~\ref{thm:theory_realizable} or Theorem~\ref{thm:theory_misspec} is sublinear, this alone does not imply monotone matched-pair welfare, terminal optimality of that welfare metric, or convergence to a stable matching.
\end{corollary}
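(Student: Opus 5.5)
The corollary is a logical non-implication, so the plan is to exhibit, within the class of markets covered by the hypotheses, a scenario where the local regret guarantee holds (trivially or by Theorem~\ref{thm:theory_realizable}/Theorem~\ref{thm:theory_misspec}) while each of the three global properties fails. The key observation is that the pseudo-regret $\mathcal R_N$ in Definition~\ref{def:theory_regret} compares the proposer only with a myopic oracle facing the \emph{same} eligible set $\mathcal R_n$ along the realized trajectory. Hence any failure that an oracle policy would also produce is invisible to $\mathcal R_N$.

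First, for monotonicity, I will take the market of Proposition~\ref{prop:theory_nonmono}. There the accepted proposal strictly decreases $W_{\mathrm{tot}}$ and $W_{\mathrm{avg}}$. I will arrange that the proposer's eligible set at that attempt is a singleton $\{j\}$, or that $j$ is the oracle maximizer of $\Delta U_n(j)p_n(j)$. Then $J_n=J_n^\star$ and $r_n=0$. Embedding this step into any trajectory on which Theorem~\ref{thm:theory_realizable} applies leaves the sublinear bound intact, since adding a zero-regret step cannot hurt. Realizability on singleton sets is easy to arrange: choose $p_n(j)=\sigma(x^\top\theta^\star)$ by setting the receiver's threshold randomly, or simply note that Theorem~\ref{thm:theory_misspec} allows arbitrary $b_n$ within an envelope. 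Thus sublinear regret coexists with a welfare decrease.

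Second, for terminal optimality and stability, I will use a small $2\times2$ market with lock probability $\rho_t=1$ after the first period. The first accepted (regret-zero, oracle-optimal) proposal forms a pair that is then locked. The remaining agents are then forced into a matching with lower total welfare than the welfare-maximizing matching, and a blocking pair $(m,w)$ in the sense of Appendix~\ref{app:matching_experiment_details} remains. Both members of the blocking pair are permanently matched, so neither is ever eligible to the other again. The regret sequence is identically zero from that point on. Therefore $\mathcal R_N/N\to0$ while the terminal matching is neither welfare-optimal nor stable.

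The main obstacle is making the counterexamples genuinely compatible with the theorem hypotheses rather than vacuous. Specifically, I must check Assumptions~\ref{ass:theory_contexts}, \ref{ass:theory_utility}, and either \ref{ass:theory_realizable} or \ref{ass:theory_misspec} on the constructed instances, including over counterfactual arms. My first attempt is the degenerate route: use singleton eligible sets, so that oracle and learner coincide, together with the misspecification version with $\eta_n\le1$ on a finite horizon. Since the implication fails, the corollary follows.
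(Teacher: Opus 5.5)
Your proposal is correct in substance, and it is more constructive than the paper's proof. The paper argues in three sentences. The regret theorems only bound one-step loss against a myopic oracle on the same eligible set. Proposition~\ref{prop:theory_nonmono} gives an accepted rematching that lowers matched-pair welfare. Stability is not implied because the benchmark ``contains no stability constraint.''

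That argument never checks that the rematching step is low-regret, and it exhibits no blocking pair. It also does not show a \emph{terminal} welfare loss, and a one-step drop cannot show this on its own. In that very instance, setting $U_{m_2}(w_1)=U_{w_1}(m_2)=1$ lets the displaced agents rematch into $\{(m_1,w_2),(m_2,w_1)\}$, which maximizes $W_{\mathrm{tot}}$ and has no blocking pair.

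You supply the missing pieces:
\begin{itemize}
\item a zero-regret check on the rematching step, which is automatic there because $U_{m_1}(w_1)-b_{m_1,t}=0\le\epsilon_U$ leaves $\{w_2\}$ as the only eligible candidate;
\item a lock-based construction, which rests on the fact that permanently matched agents leave $\mathcal P_t(\cdot)$, so a block not resolved before commitment is never resolved.
\end{itemize}
The paper's route buys brevity. Yours certifies all three failures and shows they arise even when every proposer's pseudo-regret is identically zero.

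Two points need tightening.

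\textbf{Drop the realizability route.} Randomizing thresholds changes the mechanism. In the Proposition~\ref{prop:theory_nonmono} instance $w_2$ accepts $m_1$ whatever her state, so $p_n(w_2)=1$, and no $\sigma(x^\top\theta^\star)$ with $\|x\|_2\le L$, $\|\theta^\star\|_2\le S$ can equal $1$. With $\eta_n\equiv1$, the bound of Theorem~\ref{thm:theory_misspec} is linear in $N$. So neither theorem certifies anything here, and the ``embedding'' step is unnecessary. Your sublinearity comes entirely from $r_n\equiv0$; on a finite run, read it as zero pseudo-regret along the realized trajectory.

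\textbf{Make the $2\times2$ lock instance explicit,} including a positive-probability activation sequence. Without that, an activated blocking-pair member with the partner exposed would simply dissolve the block. One instance that works:
\begin{itemize}
\item $r=0.25$ and $\epsilon_U=0.01$;
\item $U_{m_1}(w_2)=0.9$;
\item $U_{w_2}(m_1)=U_{m_2}(w_1)=U_{w_1}(m_2)=0.3$;
\item $U_{m_1}(w_1)=U_{w_1}(m_1)=0.2$;
\item $U_{m_2}(w_2)=U_{w_2}(m_2)=1$;
\item period-1 activation order $m_1,w_1,m_1,w_1$.
\end{itemize}
Then every attempt has a singleton eligible set. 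The pairs $(m_1,w_2)$ and $(m_2,w_1)$ both lock with probability $\rho_1^2>0$, and $(m_2,w_2)$ is blocking. Total and average welfare ($0.9$ and $0.45$) fall below those of $\{(m_1,w_1),(m_2,w_2)\}$ ($1.2$ and $0.6$).
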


\subsection{Proofs}
\label{app:theory_proofs}

\paragraph{Proof of Proposition~\ref{prop:theory_one_step}.}
\begin{proof}
Condition on $\mathcal F_{n-1}$. Since $\Delta U_n(j)$ is known before the current response, for every current candidate $j$,
\[
\begin{aligned}
\mathbb E[R_n(j)\mid\mathcal F_{n-1}]
&=
\mathbb E[Y_n(j)\Delta U_n(j)\mid\mathcal F_{n-1}]\\
&=
\Delta U_n(j)\mathbb E[Y_n(j)\mid\mathcal F_{n-1}]\\
&=
\Delta U_n(j)p_n(j).
\end{aligned}
\]
Therefore any maximizer of $\Delta U_n(j)p_n(j)$ maximizes conditional expected one-step utility improvement.
\end{proof}

\paragraph{Specialization of Abbasi-Yadkori--P\'al--Szepesv\'ari.}
The only external concentration result used in the learning proofs is the time-uniform self-normalized inequality of \citet{abbasi2011improved}, Theorem~1. Their theorem applies to a predictable vector sequence $X_m$ and a conditionally mean-zero, $R$-sub-Gaussian scalar noise sequence $\varepsilon_m$, with
\[
S_t=\sum_{m=1}^t\varepsilon_mX_m,
\qquad
\bar V_t=V+\sum_{m=1}^tX_mX_m^\top,
\]
and yields, with probability at least $1-\delta$, simultaneously for all $t$,
\[
\|S_t\|_{\bar V_t^{-1}}^2
\le
2R^2\log\!\left(
\frac{\det(\bar V_t)^{1/2}}{\det(V)^{1/2}\delta}
\right).
\]
Under Assumption~\ref{ass:theory_realizable}, set
\[
\varepsilon_m:=Y_m-\sigma(X_m^\top\theta^\star),
\qquad
V=\lambda I,
\qquad
t=n-1.
\]
Then $\mathbb E[\varepsilon_m\mid\mathcal F_{m-1}]=0$. Conditional on $\mathcal F_{m-1}$, $Y_m\in\{0,1\}$, so Hoeffding's lemma implies that the centered Bernoulli noise $\varepsilon_m$ is conditionally $1/2$-sub-Gaussian. Moreover $\bar V_{n-1}=G_n$. Therefore, with probability at least $1-\delta$, simultaneously for every $n$,
\begin{equation}
\label{eq:theory_selfnorm}
\left\|
\sum_{m=1}^{n-1}\varepsilon_mX_m
\right\|_{G_n^{-1}}
\le
\frac12
\sqrt{
\log\frac{\det G_n}{\lambda^d}
+2\log\frac1\delta
}.
\end{equation}
The adaptive proposal policy is compatible with this result because the theorem requires predictability rather than i.i.d. contexts: $X_n$ may depend on the entire observed history and current candidate set as long as it is chosen before $Y_n$ is observed.

\paragraph{Proof of Lemma~\ref{lem:theory_curvature}.}
\begin{proof}
Fix $n$, $m<n$, and $u\in[0,1]$, and set $\theta_u:=\theta^\star+u(\widehat\theta_n-\theta^\star)$. Both $\theta^\star$ and $\widehat\theta_n$ belong to $\Theta$, which is convex, so $\theta_u\in\Theta$ and $\|\theta_u\|_2\le S$. By Cauchy--Schwarz and Assumption~\ref{ass:theory_contexts},
\[
\left|X_m^\top\theta_u\right|
\le\|X_m\|_2\|\theta_u\|_2
\le LS.
\]
The derivative $\sigma'(z)=\sigma(z)(1-\sigma(z))$ is even and strictly decreasing in $|z|$, so $|z|\le LS$ implies $\sigma'(z)\ge\sigma'(LS)=\underline\kappa$. Finally $\underline\kappa\le\sigma'(0)=1/4$, and $\underline\kappa>0$ because $L$ and $S$ are finite.
\end{proof}

\paragraph{Constrained logistic estimation error.}
Because $\mathcal L_n$ is convex and differentiable and $\Theta$ is convex, the constrained minimizer satisfies the first-order variational inequality
\[
\left\langle
\nabla\mathcal L_n(\widehat\theta_n),\,
\theta-\widehat\theta_n
\right\rangle\ge0
\qquad\text{for every }\theta\in\Theta.
\]
Since $\theta^\star\in\Theta$, choosing $\theta=\theta^\star$ gives
\begin{equation}
\label{eq:theory_vi}
\left\langle
\nabla\mathcal L_n(\widehat\theta_n),\,
e_n
\right\rangle\le0,
\qquad
e_n:=\widehat\theta_n-\theta^\star.
\end{equation}
When the norm constraint is inactive, Equation~\ref{eq:theory_vi} holds with equality and reduces to the usual unconstrained score equation, so the argument below covers both cases. Set
\[
\varepsilon_m:=Y_m-\sigma(X_m^\top\theta^\star),
\qquad
S_n:=\sum_{m=1}^{n-1}\varepsilon_mX_m,
\]
and substitute $Y_m=\sigma(X_m^\top\theta^\star)+\varepsilon_m$ into the gradient
\[
\nabla\mathcal L_n(\widehat\theta_n)
=
\sum_{m=1}^{n-1}[\sigma(X_m^\top\widehat\theta_n)-Y_m]X_m
+\lambda\widehat\theta_n
\]
to obtain
\begin{equation}
\label{eq:theory_score}
\nabla\mathcal L_n(\widehat\theta_n)
=
\sum_{m=1}^{n-1}
[\sigma(X_m^\top\widehat\theta_n)-\sigma(X_m^\top\theta^\star)]X_m
+\lambda e_n
-\left(S_n-\lambda\theta^\star\right).
\end{equation}
For each $m<n$, the scalar mean-value theorem gives a point $\xi_{m,n}$ between $X_m^\top\theta^\star$ and $X_m^\top\widehat\theta_n$ such that
\[
\sigma(X_m^\top\widehat\theta_n)-\sigma(X_m^\top\theta^\star)
=
\sigma'(\xi_{m,n})X_m^\top e_n.
\]
Thus $\nabla\mathcal L_n(\widehat\theta_n)=H_ne_n-(S_n-\lambda\theta^\star)$, where
\[
H_n
:=
\lambda I+\sum_{m=1}^{n-1}\sigma'(\xi_{m,n})X_mX_m^\top,
\]
and Equation~\ref{eq:theory_vi} becomes
\[
e_n^\top H_ne_n\le e_n^\top(S_n-\lambda\theta^\star).
\]
Each $\xi_{m,n}$ equals $X_m^\top\theta_u$ for some $u\in[0,1]$, because the map $u\mapsto X_m^\top[\theta^\star+u(\widehat\theta_n-\theta^\star)]$ is affine and traverses exactly the interval between the two endpoints. Lemma~\ref{lem:theory_curvature} therefore implies
\[
H_n
\succeq
\lambda I+\underline\kappa\sum_{m=1}^{n-1}X_mX_m^\top
\succeq
\underline\kappa G_n,
\]
where the final inequality uses $0<\underline\kappa\le1/4<1$. Hence
\[
\begin{aligned}
\underline\kappa\|e_n\|_{G_n}^2
&\le e_n^\top H_ne_n\\
&\le e_n^\top(S_n-\lambda\theta^\star)\\
&\le
\|e_n\|_{G_n}
\left(
\|S_n\|_{G_n^{-1}}
+\lambda\|\theta^\star\|_{G_n^{-1}}
\right).
\end{aligned}
\]
Since $G_n\succeq\lambda I$,
\[
\lambda\|\theta^\star\|_{G_n^{-1}}
\le\sqrt\lambda\|\theta^\star\|_2
\le\sqrt\lambda S.
\]
Combining this with Equation~\ref{eq:theory_selfnorm} proves the first inequality in Equation~\ref{eq:theory_param_confidence}. The determinant bound
\[
\log\frac{\det G_n}{\lambda^d}
\le
d\log\!\left(1+\frac{(n-1)L^2}{\lambda d}\right)
\]
follows from the arithmetic--geometric mean inequality applied to the eigenvalues and yields $\beta_n\le\bar\beta_n$.

At the fitted endpoint, Lemma~\ref{lem:theory_curvature} applied with $u=1$ and the global bound $\sigma'(z)\le1/4\le1$ give
\begin{equation}
\label{eq:theory_sandwich}
\underline\kappa G_n\preceq V_n\preceq G_n.
\end{equation}
After inversion,
\[
G_n^{-1}\preceq V_n^{-1}\preceq\frac{1}{\underline\kappa}G_n^{-1}.
\]
For every candidate context $x$,
\[
\begin{aligned}
|x^\top(\widehat\theta_n-\theta^\star)|
&\le
\|\widehat\theta_n-\theta^\star\|_{G_n}\|x\|_{G_n^{-1}}\\
&\le
\beta_n(\delta)\|x\|_{V_n^{-1}}.
\end{aligned}
\]
Therefore
\[
x^\top\theta^\star
\le
x^\top\widehat\theta_n
+\beta_n(\delta)\sqrt{x^\top V_n^{-1}x},
\]
and monotonicity of the sigmoid proves Equation~\ref{eq:theory_valid_optimism}.

\paragraph{Proof of Theorem~\ref{thm:theory_realizable}.}
On the confidence event, valid optimism and UCB maximization imply
\[
\Delta U_n(J_n^\star)p_n(J_n^\star)
\le
\Delta U_n(J_n^\star)p_n^{\mathrm{UCB}}(J_n^\star)
\le
\Delta U_n(J_n)p_n^{\mathrm{UCB}}(J_n).
\]
Hence
\begin{equation}
\label{eq:theory_inst1}
r_n
\le
\Delta U_n(J_n)
[p_n^{\mathrm{UCB}}(J_n)-p_n(J_n)].
\end{equation}
Let $s_n:=\sqrt{X_n^\top V_n^{-1}X_n}$. The confidence event gives
\[
|X_n^\top(\widehat\theta_n-\theta^\star)|\le\beta_ns_n.
\]
Thus the optimistic logit exceeds the true logit by at most $2\beta_ns_n$. Since $\sup_z\sigma'(z)=1/4$, the sigmoid is globally $1/4$-Lipschitz and
\[
0\le p_n^{\mathrm{UCB}}(J_n)-p_n(J_n)
\le\frac{\beta_n}{2}s_n.
\]
Using $\Delta U_n(J_n)\le D$ in Equation~\ref{eq:theory_inst1},
\begin{equation}
\label{eq:theory_inst2}
r_n\le\frac{D\beta_n}{2}s_n.
\end{equation}
From Equation~\ref{eq:theory_sandwich},
\[
s_n^2
\le
\frac{1}{\underline\kappa}X_n^\top G_n^{-1}X_n.
\]
Set $q_n:=X_n^\top G_n^{-1}X_n$. Since $G_n\succeq\lambda I$, $0\le q_n\le L^2/\lambda$. By the matrix determinant lemma,
\[
\log\frac{\det G_{N+1}}{\det G_1}
=
\sum_{n=1}^N\log(1+q_n).
\]
For $0\le q\le L^2/\lambda$,
\[
\log(1+q)
\ge\frac{q}{1+q}
\ge\frac{q}{1+L^2/\lambda}.
\]
Therefore
\[
\sum_{n=1}^Ns_n^2
\le
\frac{d}{\underline\kappa}
\left(1+\frac{L^2}{\lambda}\right)
\log\!\left(1+\frac{NL^2}{\lambda d}\right).
\]
Cauchy--Schwarz yields
\begin{equation}
\label{eq:theory_sumwidth}
\sum_{n=1}^Ns_n
\le
\sqrt{
\frac{Nd}{\underline\kappa}
\left(1+\frac{L^2}{\lambda}\right)
\log\!\left(1+\frac{NL^2}{\lambda d}\right)
}.
\end{equation}
Summing Equation~\ref{eq:theory_inst2}, using monotonicity of $\bar\beta_n$, and applying Equation~\ref{eq:theory_sumwidth} proves Equation~\ref{eq:theory_regret_realizable}.

\paragraph{Proof of Theorem~\ref{thm:theory_misspec}.}
Under Assumption~\ref{ass:theory_misspec}, for each selected historical action define
\[
p_m:=\mathbb E[Y_m\mid\mathcal F_{m-1}],
\qquad
\xi_m:=Y_m-p_m.
\]
Then $\mathbb E[\xi_m\mid\mathcal F_{m-1}]=0$, and $\xi_m$ is conditionally $1/2$-sub-Gaussian. Also
\[
Y_m=\sigma(X_m^\top\theta^\star)+b_m+\xi_m.
\]
Assumption~\ref{ass:theory_misspec} gives $\theta^\star\in\Theta$, so the variational inequality of Equation~\ref{eq:theory_vi} applies verbatim. Repeating the mean-value expansion with $Y_m$ decomposed as above gives
\[
\nabla\mathcal L_n(\widehat\theta_n)
=
H_ne_n
-\left(
\sum_{m=1}^{n-1}\xi_mX_m
+
\sum_{m=1}^{n-1}b_mX_m
-
\lambda\theta^\star
\right),
\]
and therefore
\[
e_n^\top H_ne_n
\le
e_n^\top\!\left(
\sum_{m=1}^{n-1}\xi_mX_m
+
\sum_{m=1}^{n-1}b_mX_m
-
\lambda\theta^\star
\right).
\]
The martingale term obeys the same specialized self-normalized inequality above. For the bias term, let $\mathbf X$ be the matrix whose $m$th row is $X_m^\top$ and let $b=(b_1,\ldots,b_{n-1})^\top$. Then
\[
\sum_{m=1}^{n-1}b_mX_m=\mathbf X^\top b,
\qquad
G_n=\lambda I+\mathbf X^\top\mathbf X.
\]
The eigenvalues of $\mathbf XG_n^{-1}\mathbf X^\top$ are $s_k^2/(\lambda+s_k^2)\le1$, so
\[
\begin{aligned}
\|\mathbf X^\top b\|_{G_n^{-1}}^2
&=
b^\top\mathbf XG_n^{-1}\mathbf X^\top b\\
&\le\|b\|_2^2
\le\sum_{m=1}^{n-1}\eta_m^2.
\end{aligned}
\]
The same curvature argument based on Lemma~\ref{lem:theory_curvature} therefore gives
\[
\|\widehat\theta_n-\theta^\star\|_{G_n}
\le\beta_n^{\mathrm{mis}}(\delta).
\]
Now define the logistic reference $\bar p_n(j):=\sigma(x_{j,n}^\top\theta^\star)$ and its UCB
\[
u_n(j)
:=
\sigma\!\left(
x_{j,n}^\top\widehat\theta_n
+\beta_n^{\mathrm{mis}}s_n(x_{j,n})
\right).
\]
The confidence argument gives $\bar p_n(j)\le u_n(j)$, whereas misspecification gives $p_n(j)\le\bar p_n(j)+\eta_n$. For the true oracle action $J_n^\star$,
\[
\begin{aligned}
\Delta U_n(J_n^\star)p_n(J_n^\star)
&\le
\Delta U_n(J_n^\star)u_n(J_n^\star)+D\eta_n\\
&\le
\Delta U_n(J_n)u_n(J_n)+D\eta_n.
\end{aligned}
\]
Subtracting the true expected gain of the selected action gives
\[
\begin{aligned}
r_n
&\le
\Delta U_n(J_n)[u_n(J_n)-p_n(J_n)]+D\eta_n\\
&\le
\Delta U_n(J_n)[u_n(J_n)-\bar p_n(J_n)]+2D\eta_n.
\end{aligned}
\]
The same $1/4$-Lipschitz argument yields
\[
u_n(J_n)-\bar p_n(J_n)
\le\frac{\beta_n^{\mathrm{mis}}}{2}s_n,
\]
so
\[
r_n
\le
\frac{D\beta_n^{\mathrm{mis}}}{2}s_n
+2D\eta_n.
\]
Summing and applying Equation~\ref{eq:theory_sumwidth} proves Equation~\ref{eq:theory_regret_mis}.

For the stated sufficient condition, if $0\le\eta_n\le1$ and $\sum_{n=1}^N\eta_n=o(N/\log N)$, then
\[
\sum_{n=1}^N\eta_n^2
\le
\sum_{n=1}^N\eta_n
=
o(N/\log N).
\]
Hence $\sqrt{\sum_{n=1}^N\eta_n^2}=o(\sqrt{N/\log N})$. Its contribution through $\bar\beta_N^{\mathrm{mis}}$ times the width sum is $o(N)$; the explicit $\sum_n\eta_n$ term is also $o(N)$, while the ordinary statistical term is sublinear for fixed dimension and constants. Therefore $\mathcal R_N/N\to0$.

\paragraph{Proof of Proposition~\ref{prop:theory_bilateral}.}
\begin{proof}
Because $j$ belongs to the proposer's eligible set,
\[
\Delta U_{ij,t}=U_i(j)-b_{i,t}>\epsilon_U\ge0,
\]
so $U_i(j)>b_{i,t}$. If $i$ is single, $b_{i,t}=r_i=v_i(\mu_t)$. If $i$ is currently matched,
\[
b_{i,t}=\max\{r_i,U_i(\mu_t(i))\}\ge U_i(\mu_t(i))=v_i(\mu_t).
\]
Thus $U_i(j)>v_i(\mu_t)$. Acceptance by $j$ means $U_j(i)>b^R_{j,t}$. If $j$ is single, $b^R_{j,t}=r_j=v_j(\mu_t)$; if $j$ is matched,
\[
b^R_{j,t}=\max\{r_j,U_j(\mu_t(j))\}\ge U_j(\mu_t(j))=v_j(\mu_t).
\]
Hence $U_j(i)>v_j(\mu_t)$ as well.
\end{proof}

\paragraph{Proof of Proposition~\ref{prop:theory_nonmono}.}
\begin{proof}
Consider $M=\{m_1,m_2\}$ and $W=\{w_1,w_2\}$, with reservation utilities below $0.60$ and all current relationships temporary. Let
\[
\mu_t=\{(m_1,w_1),(m_2,w_2)\},
\]
and choose directional utilities
\[
U_{m_1}(w_1)=0.60,
\quad U_{w_1}(m_1)=0.99,
\quad U_{m_2}(w_2)=0.99,
\quad U_{w_2}(m_2)=0.60,
\]
with cross-pair values
\[
U_{m_1}(w_2)=0.65,
\qquad
U_{w_2}(m_1)=0.65.
\]
Take $\epsilon_U<0.05$. Then $m_1$ strictly prefers $w_2$ to its current benchmark and $w_2$ accepts $m_1$, so the proposal is a strict bilateral improvement for the consenting agents. Before the proposal,
\[
W_{\mathrm{tot}}(\mu_t)
=
\frac{0.60+0.99}{2}
+
\frac{0.99+0.60}{2}
=1.59,
\qquad
W_{\mathrm{avg}}(\mu_t)=0.795.
\]
After acceptance, the two old temporary relationships are dissolved, $w_1$ and $m_2$ are displaced, and immediately after rematching
\[
\mu_{t+1}=\{(m_1,w_2)\}.
\]
Hence
\[
W_{\mathrm{tot}}(\mu_{t+1})
=
W_{\mathrm{avg}}(\mu_{t+1})
=
\frac{0.65+0.65}{2}
=0.65.
\]
Both total matched-pair welfare and average mutual welfare therefore decrease. The decline is caused by the losses of the displaced partners, which are not part of the bilateral acceptance test.
\end{proof}

\paragraph{Proof of Corollary~\ref{cor:theory_local_global}.}
\begin{proof}
The local regret theorems control only the loss in one-step expected proposer improvement relative to a myopic acceptance-probability oracle. Proposition~\ref{prop:theory_nonmono} constructs a feasible accepted rematching in which both consenting agents improve while matched-pair welfare decreases. Therefore local decision quality, even if asymptotically no-regret, does not by itself imply monotonicity or terminal optimality of the matched-pair welfare metric. Stability is likewise not implied because the local regret benchmark contains no stability constraint.
\end{proof}

\section{Full Decentralized Matching Procedure}
\label{app:full_matching_algorithm}

Algorithm~\ref{alg:decentralized_matching} expands the period-level procedure summarized in Section~\ref{sec:matching_protocol}.
It makes explicit the order of local exposure, valuation, proposal, feedback, model updating, relationship revision, and stochastic commitment.

\begin{algorithm}[htbp]
\caption{Decentralized LLM-Agent Matching with Local Logistic-UCB}
\label{alg:decentralized_matching}
\begin{algorithmic}[1]

\Require
Agent sets $\mathcal{M},\mathcal{W}$; horizon $T$; proposal budget $L$; reservation utilities $\{r_i\}$; regularization $\{\lambda_i\}$; exploration coefficients $\{\beta_{i,t}\}$; utility threshold $\epsilon_U$; commitment probabilities $\{\rho_t\}$.

\State Initialize all agents as single; set local histories empty and initialize local Logistic-UCB models.

\For{$t=1,\ldots,T$}
    \State $\mathcal{A}^{\mathrm{act}}_t \gets \{i\in\mathcal{A}: i \text{ is not permanently matched}\}$
    \If{$\mathcal{A}^{\mathrm{act}}_t=\varnothing$}
        \State \textbf{break}
    \EndIf

    \For{$a=1,\ldots,|\mathcal{A}^{\mathrm{act}}_t|$}
        \State Sample $i\sim\mathrm{Uniform}(\mathcal{A}^{\mathrm{act}}_t)$ with replacement
        \State Sample local exposure set $\mathcal{C}_t(i)\subseteq\mathcal{P}_t(i)$
        \State Obtain $U_i(j)$ for all $j\in\mathcal{C}_t(i)$ and construct $\mathcal{E}_t(i)$ using Equation~\ref{eq:eligible_candidate_set}
        \State $\mathcal{R}_t \gets \mathcal{E}_t(i)$

        \For{$q=1,\ldots,L$}
            \If{$\mathcal{R}_t=\varnothing$}
                \State \textbf{break}
            \EndIf

            \State For each $j\in\mathcal{R}_t$, construct $x^{(i)}_{j,t}$ and compute $I_{ij,t}$ using Equations~\ref{eq:logistic_ucb} and~\ref{eq:proposal_index}
            \State $j^*\gets\arg\max_{j\in\mathcal{R}_t} I_{ij,t}$
            \State Obtain $U_{j^*}(i)$ and compute $y^*\gets Y_{i\rightarrow j^*,t}$ using Equation~\ref{eq:binary_acceptance}
            \State $\mathcal{H}_{i,t}\gets\mathcal{H}_{i,t}\cup\{(x^{(i)}_{j^*,t},y^*)\}$
            \State Refit $\widehat\theta_{i,t}$ by Equation~\ref{eq:appendix_logistic_loss} and update $V_{i,t}$ by Equation~\ref{eq:appendix_fisher_matrix}

            \If{$y^*=1$}
                \State Dissolve temporary matches involving $i$ or $j^*$ and return displaced partners to active search
                \State Form $(i,j^*)$ as a new temporary pair
                \State \textbf{break}
            \Else
                \State $\mathcal{R}_t\gets\mathcal{R}_t\setminus\{j^*\}$
            \EndIf
        \EndFor
    \EndFor

    \State Let $\mathcal{T}_t$ denote temporary pairs after all activations
    \ForAll{$(i,j)\in\mathcal{T}_t$ \textbf{ in parallel}}
        \State Sample $Z_{ij,t}\sim\mathrm{Bernoulli}(\rho_t)$
        \If{$Z_{ij,t}=1$}
            \State Mark $(i,j)$ as permanent
        \EndIf
    \EndFor
\EndFor

\State \Return final matching $\mu$

\end{algorithmic}
\end{algorithm}

The procedure preserves three distinct forms of locality.
Candidate discovery is local through $\mathcal{C}_t(i)$; LLM valuation is triggered only when a candidate becomes locally relevant; and reciprocal-acceptance learning is agent-specific because only proposals made by $i$ enter $\mathcal{H}_{i,t}$.
No global acceptance model, shared interaction history, or market-wide preference matrix is available to the proposed agents.

\section{Limitations and Future Directions}
\label{app:limitations_future}

\subsection{Limitations}

The empirical grounding of this study is intentionally specific to the Chinese marriage-market setting represented by the 2015 population sample and the choice experiment of \citet{zhou2023gender}. Accordingly, the validation establishes alignment with this empirical reference rather than universal behavioral validity across cultures, cohorts, or institutional settings. Extending the framework to other populations would require corresponding domain-specific demographic construction and behavioral validation.

The simulated market also remains deliberately stylized. The main experiments use a $50\times50$ population with random local exposure, a common reservation-utility specification, and probabilistic commitment. Richer interaction structures---including social networks, endogenous meeting processes, heterogeneous or adaptive reservation values, noisy or private attributes, and evolving outside options---may alter the resulting matching dynamics. These components can be incorporated within the same decentralized framework, but are outside the scope of the present experiments.

The theoretical results in Appendix~\ref{app:theoretical_analysis} characterize local proposal-level learning and bilateral rematching; they do not guarantee terminal welfare optimality or convergence to a stable matching. Establishing endogenous bounds on dynamic misspecification and conditions linking local learning to market-level outcomes remains an open direction. Moreover, learned acceptance coefficients represent predictive associations rather than structural or causal preferences, because proposal observations are endogenously selected by the matching policy.

Finally, LLM-based behavioral modeling remains sensitive to persona construction, prompt design, model choice, and demographic representation \citep{anthis2025position,zhou2026the,ludwig2024large}. Our cross-backbone and empirical-reference validation reduces dependence on any single model realization, but does not eliminate these broader sources of behavioral-model uncertainty.

\subsection{Future Directions}

The framework naturally extends to other decentralized economic settings in which agents evaluate counterparties while learning whether interaction will be reciprocated, including labor-market search, hiring, team formation, housing, platform matching, and repeated buyer--seller or lender--borrower interactions.
The same separation between domain-grounded behavioral valuation and local reciprocal learning can be retained while replacing personas, actions, institutions, and validation targets with domain-specific counterparts.

A second direction is to enrich the interaction structure through social networks, geography, recommender systems, endogenous communication, longer memories, adaptive reservation values, or information sharing.
On the learning side, nonstationary bandits, hierarchical priors, and theoretical conditions connecting local learning to welfare or stability are natural extensions.
Beyond economics, the architecture can support persistent multi-agent societies in games, where NPCs form and revise friendships, rivalries, coalitions, or trading relationships through local interaction rather than a globally scripted social graph.

\end{document}